\useunder{\uline}{\ul}{}
\theoremstyle{plain}
\newtheorem{theorem}{Theorem}[section]
\newtheorem{lemma}[theorem]{Lemma}
\newtheorem{corollary}[theorem]{Corollary}
\theoremstyle{definition}
\theoremstyle{remark}
\title{Graph Few-Shot Learning via Adaptive Spectrum Experts and Cross-Set Distribution Calibration}
\author{
    Yonghao Liu$^{1}$\thanks{Equal Contribution} \;, Yajun Wang$^{1}$\footnotemark[1] \;, Chunli Guo$^{2}$\footnotemark[1] \;, Wei Pang$^3$, Ximing Li$^{1,4}$, \\ \textbf{Fausto Giunchiglia$^5$}\textbf{,} \textbf{Xiaoyue Feng$^{1}$}\thanks{Corresponding Author} \;\textbf{,} \textbf{Renchu Guan$^{1}$}\footnotemark[2]\\
    $^1$Key Laboratory of Symbolic Computation and Knowledge Engineering of the Ministry\\ of Education, College of Computer Science and Technology, Jilin University \\
    $^2$College of Software, Jilin University \\
    $^3$School of Mathematical and Computer Sciences, Heriot-Watt University \\
    $^4$RIKEN Center for Advanced Intelligence Project \\
    $^5$Department of Information Engineering and Computer Science, University of Trento \\
    \texttt{\{yonghao20, yajun24, guocl24\}@mails.jlu.edu.cn}, \\ \texttt{w.pang@hw.ac.uk}, \texttt{liximing86@gmail.com}, \texttt{fausto.giunchiglia@unitn.it}, \\ \texttt{\{fengxy, guanrenchu\}@jlu.edu.cn}
}
\begin{document}

\maketitle

\begin{abstract}
Graph few-shot learning has attracted increasing attention due to its ability to rapidly adapt models to new tasks with only limited labeled nodes. Despite the remarkable progress made by existing graph few-shot learning methods, several key limitations remain. First, most current approaches rely on predefined and unified graph filters (\textit{e.g.}, low-pass or high-pass filters) to globally enhance or suppress node frequency signals. Such fixed spectral operations fail to account for the heterogeneity of local topological structures inherent in real-world graphs. Moreover, these methods often assume that the support and query sets are drawn from the same distribution. However, under few-shot conditions, the limited labeled data in the support set may not sufficiently capture the complex distribution of the query set, leading to suboptimal generalization.
To address these challenges, we propose \textbf{GRACE}, a novel \textbf{G}raph few-shot lea\textbf{R}ning framework that integrates \textbf{A}daptive spectrum experts with \textbf{C}ross-s\textbf{E}t distribution calibration techniques. 
Theoretically, the proposed approach enhances model generalization by adapting to both local structural variations and cross-set distribution calibration. Empirically, GRACE consistently outperforms state-of-the-art baselines across a wide range of experimental settings. Our code can be found \textcolor{red}{\href{https://github.com/KEAML-JLU/GRACE}{here}}.
\end{abstract}

\section{Introduction}
Graphs, as a fundamental and expressive data structure, are widely employed to model a variety of complex systems in the real world \cite{zhou2020graph, wu2020comprehensive}, including social networks \cite{kipf2016semi, velickovic2017graph}, transportation networks \cite{wang2020traffic, derrow2021eta}, and protein–protein interaction networks \cite{li2021structure, mastropietro2023learning}. Recently, graph neural networks (GNNs) have emerged as the \textit{de facto} standard for learning on graph-structured data due to their powerful representation capabilities. However, the effectiveness of GNN-based models heavily relies on the availability of a large number of labeled nodes. A major challenge lies in the fact that annotating large-scale datasets is often impractical in real-world scenarios \cite{song2023comprehensive}. This process is not only time- and resource-intensive, but also demands extensive domain-specific expertise in certain specialized fields \cite{liu2024meta, liu2025enhancing}. For example, in the biomedical domain, accurately annotating unknown genes often requires substantial knowledge of molecular biology, which is difficult even for experienced researchers \cite{hu2019strategies}. In such scenarios where labeled data are scarce, these models often suffer from severe overfitting issues \cite{tan2022transductive}. Thus, graph few-shot learning (FSL) has attracted increasing attention as a promising paradigm that enables rapid adaptation to novel tasks using only a small number of labeled samples. Existing graph FSL models typically follow a two-stage paradigm \cite{liu2025dual, liu2022few, wang2022task}. These models first employ the graph encoder to learn low-dimensional embeddings of nodes, and then apply the few-shot learning algorithm to enable rapid generalization to new tasks. While several graph few-shot learning methods have achieved impressive results \cite{ding2020graph, huang2020graph}, they still face several critical limitations that hinder their expressivity.


\textit{First}, most existing graph FSL methods are grounded in either the homophily assumption (\textit{i.e.}, nodes with the same label tend to be connected) or the heterophily assumption (\textit{i.e.}, nodes with different labels tend to be connected) \cite{zheng2022graph}. Based on these assumptions, they typically adopt predefined, uniform graph filters such as low-pass or high-pass filters \cite{han2024node}. This one-size-fits-all design implicitly applies global enhancement or suppression to node frequency signals. However, real-world graph data often exhibit significant local topological heterogeneity, where both homophilic and heterophilic connection patterns may coexist across different local regions of the graph \cite{li2022finding, mao2023demystifying}. To substantiate our claim, we visualize the local link distribution of nodes in the Cora dataset \cite{yang2016revisiting}. As shown in Fig. \ref{vis_data}, it is evident that different nodes exhibit diverse local connectivity patterns. Applying a single, globally designed filter---optimized for a specific connectivity assumption---to all nodes can lead to suboptimal performance and may adversely affect nodes whose local structures deviate from the assumed model. This naturally leads to a fundamental question: \textit{Is it possible to develop a method that enables node-specific filtering strategies to better accommodate the diverse local structures present in real-world graphs?}

\begin{wrapfigure}{r}{0.4\textwidth}            \vspace{-1em}
  \centering                     
  \includegraphics[width=0.35\textwidth]{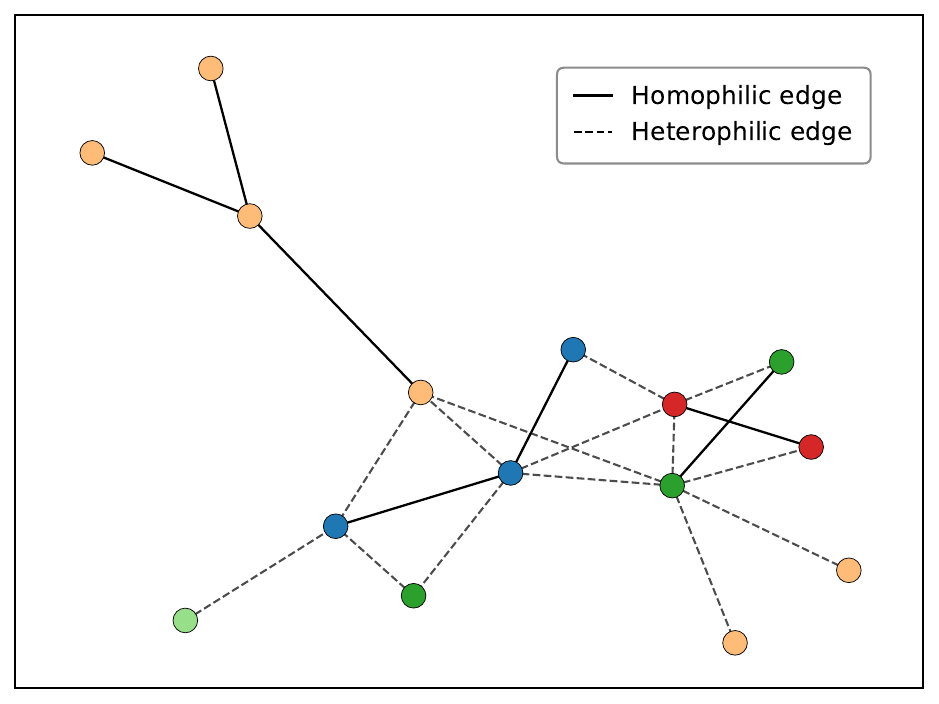}
  \caption{Diversity of local connectivity patterns in the Cora. Node colors indicate their class labels.}
  \label{vis_data}
\end{wrapfigure}



\textit{Second}, these graph FSL methods implicitly assume that the support and query sets within each task are drawn from the same underlying distribution. However, this assumption is often challenged in real-world scenarios. On the one hand, the limited labeled data in the support set may fail to adequately capture the complex distribution of the query set \cite{yang2021free}. On the other hand, the random sampling process during meta-task construction can introduce systematic biases---such as oversampling from dense subgraphs---which in turn leads to performance degradation under distribution shift conditions. The above claims are further supported by Fig. \ref{vis_dis}, where we visualize the node distributions of randomly sampled support and query sets on the Cora dataset. As shown in Fig. \ref{vis_dis}, there exists a clear distributional discrepancy between the two sets, highlighting the presence of distribution shift in practical task construction. Hence, effectively narrowing the distribution gap between the support and query sets is essential under distribution shift.

\begin{wrapfigure}{r}{0.4\textwidth}            \vspace{-1em}
  \centering                     
  \includegraphics[width=0.35\textwidth]{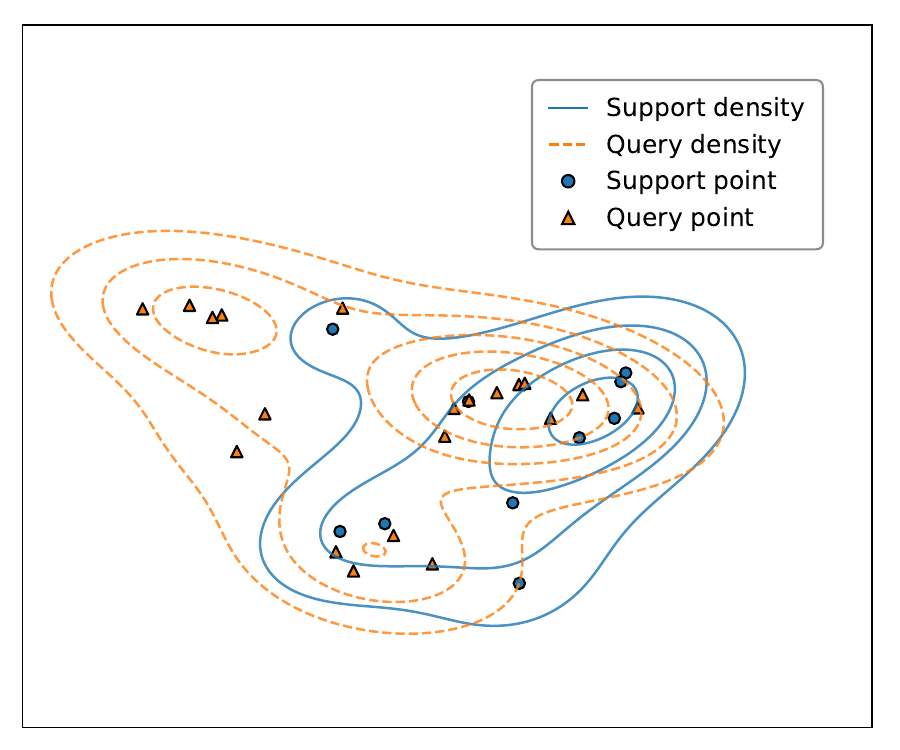}
  \caption{Visualization of distributional discrepancy between support and query sets in the Cora.}
  \label{vis_dis}
\end{wrapfigure}

To address the aforementioned challenges, we propose a novel framework named \textbf{GRACE}, which integrates both adaptive spectrum experts and cross-set distribution calibration to facilitate effective graph FSL. Specifically, inspired by the mixture-of-experts (MoE) paradigm, we develop a node-specific filtering mechanism that leverages multiple experts to model diverse local connectivity patterns. Each expert is responsible for capturing a distinct graph filtering behavior, while a gating mechanism adaptively assigns expert weights based on the structural characteristics of each node. Next, to alleviate the distributional mismatch between the support and query sets, we initially derive class prototypes from the support set, which are subsequently refined through an explicit calibration process guided by the query set. Theoretically, GRACE enhances the model's generalization lower bound by incorporating adaptive spectrum experts that align with local graph structures. Empirically, it achieves substantial performance gains over competitive baselines on several standard benchmarks. In summary, our contributions are as follows.




(I) We propose a novel framework, GRACE, which integrates adaptive spectrum experts and cross-set distribution calibration to address the challenges of graph FSL.

(II) We provide theoretical analysis showing that GRACE offers improved generalization guarantees by adapting to local structural heterogeneity and mitigating distribution shift.

(III) We conduct extensive experiments on multiple benchmark datasets, demonstrating that GRACE consistently outperforms existing state-of-the-art methods.
\section{Related Work}

\noindent \textbf{Graph Neural Networks.}
GNNs have become the cornerstone in the field of graph-structured data analysis, providing a powerful solution for graph representation learning \cite{zhou2020graph, liu2021deep}. Typically, most GNNs follow the message passing mechanism \cite{gilmer2017neural}, where the nodes continuously aggregate information from their neighboring nodes, gradually extracting local information. This characteristic enables GNNs based on this mechanism to perform excellently when dealing with homophilic graphs. However, when faced with heterophilic graphs, traditional GNNs are clearly inadequate. To this end, researchers have developed a series of specialized models \cite{lim2021large, abu2019mixhop}. Recent studies have found that graphs in the real world often exhibit mixed structural patterns \cite{li2022finding,mao2023demystifying}. However, traditional GNNs generally adopt a ``one-size-fits-all'' approach, applying the same global filter to all nodes. This practice cannot fully exploit the characteristics of each node when dealing with graphs with mixed patterns, and it is difficult to achieve the optimal effect. Therefore, our model introduces a node-specific adaptive filtering method, which selects an appropriate filter for each node according to its characteristics.

\noindent \textbf{Graph Few-Shot Learning.}
FSL aims to solve new tasks using a limited number of samples and the knowledge accumulated from previous experiences. This approach has received great attention due to its effectiveness in handling data with rare labels \cite{kim2019edge,jiang2021structure,ranjan2021learning}. Generally speaking, the existing FSL models can be divided mainly into two categories: (i) optimization-based methods \cite{liu2022few,huang2020graph,liu2024meta} and (ii) metric-based methods \cite{ding2020graph,liu2024simple, liu2025enhancing}. The former focuses on designing different mechanisms to utilize the gradients of samples. For example, the Model-Agnostic Meta-Learning algorithm (MAML) \cite{finn2017model} proposes an inner-outer loop mechanism for gradient updates to learn good initial parameters, allowing the model to quickly adapt to new tasks with a small amount of training data. The latter aims to learn a transferable distance metric to evaluate the similarity or degree of association between given samples and query samples. For instance, 
Prototypical Network \cite{snell2017prototypical} calculates the prototype of each category by taking the mean vector of the support examples and classifies query instances by measuring the Euclidean distances between these query instances and the prototypes. 

\noindent \textbf{Mixture-of-Experts.}
The MoE architecture \cite{jordan1994hierarchical} is mainly based on the principle of ``divide and conquer'' \cite{jacobs1991adaptive}, that is, first dividing the problem space, and then having specialized sub-models or experts handle their respective parts of the tasks. It has been widely applied in the fields of natural language processing \cite{du2022glam,zhou2022mixture} and computer vision \cite{mustafa2022multimodal, chen2023adamv} to improve the efficiency and performance of large-scale models. Recently, several studies \cite{han2024node, hu2021graph, wang2023graph, zeng2023mixture, wu2024graphmetro} in the graph domain have also explored the integration of MoE architectures to enhance graph representation learning. For example, GMoE \cite{wang2023graph} uses the MoE architecture to adaptively select the propagation hops for different nodes. According to the features of nodes and the information of neighboring nodes, it selects the most suitable propagation hops for each node through a gating mechanism. GraphMETRO \cite{wu2024graphmetro} utilizes the MoE architecture to address the problem of graph distribution shift. Despite recent progress in applying MoE architectures to general graph learning tasks, their potential remains unexplored in graph FSL scenarios.
\section{Preliminary Study}
In this section, we formally define the studied problem in this work. We focus on few-shot node classification (FSNC), one of the most representative tasks in graph FSL, to evaluate the performance of our proposed model. Formally, we consider an input graph $\mathcal{G}=\{\mathcal{V}, \mathcal{E}, \mathbf{X}, \mathbf{A}\}$, where $\mathcal{V}$ and $\mathcal{E}$ denote the sets of nodes and edges, respectively; $\mathbf{X} \in \mathbb R^{n\times d}$ is the node feature matrix; and $\mathbf{A} \in \{0, 1\}^{n\times n}$ is the adjacency matrix, where $\mathbf{A}_{ij}=1$ if there is an edge between node $i$ and node $j$, and $\mathbf{A}_{ij}=0$ otherwise. Typically, FSNC consists of two stages: meta-training and meta-testing. The label space of meta-training is denoted as $\mathcal{Y}_\text{base}$, and that of meta-testing as $\mathcal{Y}_\text{new}$, where $\mathcal{Y}_\text{base} \cup \mathcal{Y}_\text{new}=\mathcal{Y}$ and $\mathcal{Y}_\text{base} \cap \mathcal{Y}_\text{new}=\emptyset$. Moreover, we adopt the episodic training paradigm widely used in FSL by constructing a series of meta-tasks. In both in the meta-training and meta-testing phases, the construction of each meta-task follows a consistent procedure. Specifically, each meta-task consists of a support set and a query set, \textit{i.e.}, $\mathcal{T}_t=\{\mathcal{S}_t, \mathcal{Q}_t\}$. The support set is formed by randomly sampling $N$ classes from a particular label space $\mathcal{Y}_\ast$, and selecting $
K$ labeled nodes per class---yielding an $N$-way $K$-shot classification problem, \textit{i.e.}, $\mathcal{S}_t=\{(\mathbf{X}^s_{t,i}, \mathbf{Y}^s_{t,i})\}_{i=1}^{N \times K}$. The query set is then constructed by sampling $M$ additional nodes per class from the remaining labeled data of those same $N$ classes, \textit{i.e.}, $\mathcal{Q}_t=\{(\mathbf{X}^q_{t,i}, \mathbf{Y}^q_{t, i})\}_{i=1}^{N \times M}$. Note that the only difference between meta-training and meta-testing tasks lies in the label space from which classes are sampled: the former samples classes from $\mathcal{Y}_\text{base}$, while the latter samples from $\mathcal{Y}_\text{new}$. The goal of FSNC is to extract generalizable knowledge from a collection of meta-training tasks $\mathcal{T}_\text{train}=\{\mathcal{T}_t\}_{t=1}^T$, such that the model can swiftly adapt to a meta-testing task $\mathcal{T}_\text{test}=\{\mathcal{S}_\text{test}, \mathcal{Q}_\text{test} \}$ by leveraging a small support set $\mathcal{S}_\text{test}=\{\mathbf{X}_{\text{test},i}^s, \mathbf{Y}_{\text{test}, i}^s\}_{i=1}^{N\times K}$ containing only a few labeled instances per class, and accurately predict labels for unseen nodes in the corresponding query set $\mathcal{Q}_\text{test}=\{\mathbf{X}_{\text{test},i}^q, \mathbf{Y}_{\text{test}, i}^q\}_{i=1}^{N \times M}$.
\section{Method}
\label{method}
In this section, we provide detailed descriptions of our proposed model, GRACE, which consists of two key components: \textit{adaptive spectrum experts} and \textit{cross-set distribution calibration}. The former dynamically assigns expert weights for each node based on its local connectivity patterns, enabling the model to learn more discriminative node embeddings. The latter leverages class prototypes to explicitly calibrate the distribution shift between the support and query sets, thereby enhancing the model's generalization across tasks. To facilitate the better understanding of our model, we illustrate the overall framework of GRACE in Fig. \ref{framework}.

\begin{figure*}
    \centering
    \includegraphics[width=0.95\linewidth]{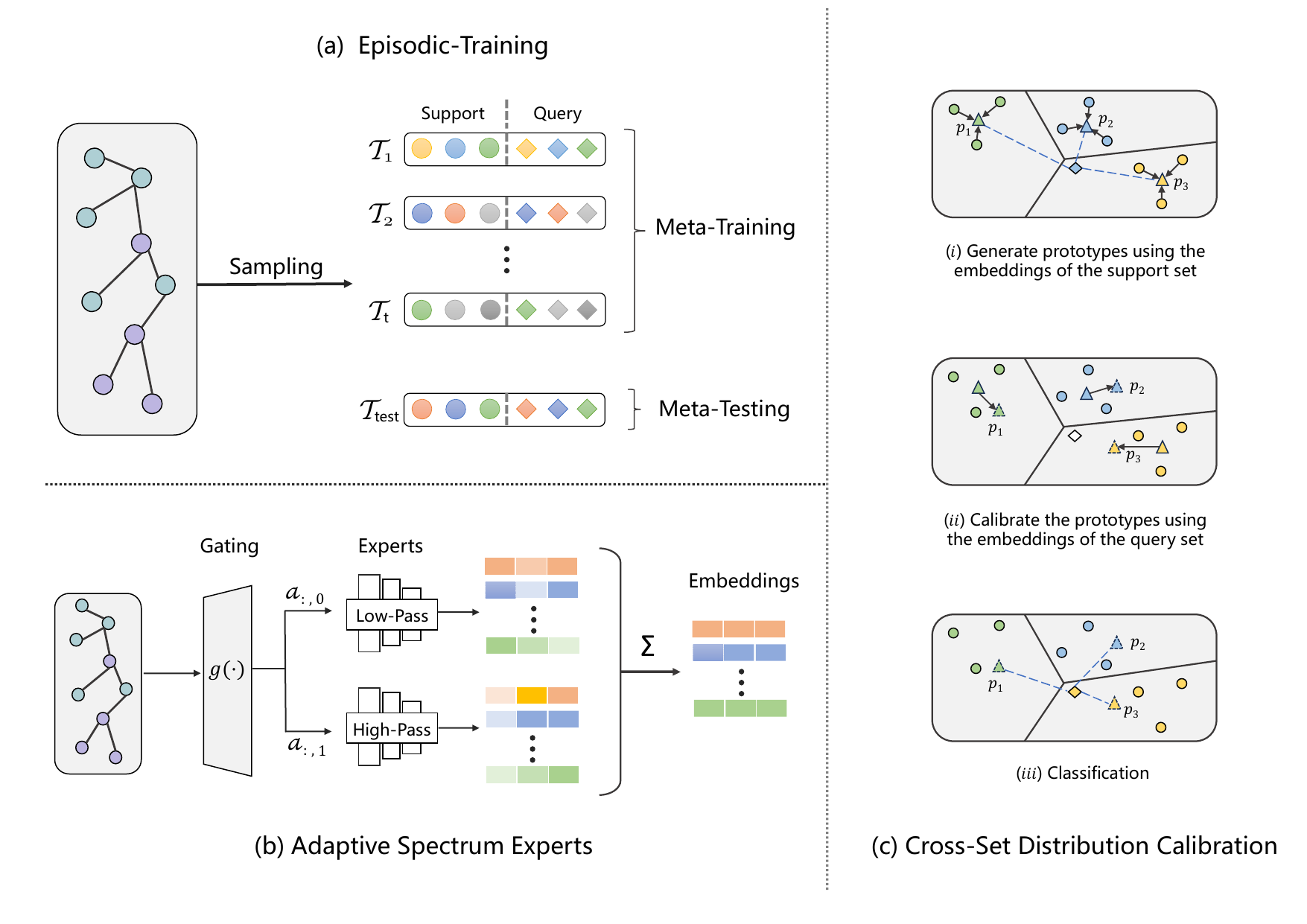}
    \caption{The overall framework of GRACE. (a) Illustration of episodic training. In each episode, an FSNC task is constructed by randomly sampling from the original graph. (b) Adaptive spectrum experts. By introducing multiple experts to capture the diverse frequency components of nodes, we employ a gating module to adaptively weight the spectrum experts. (c) Cross-set distribution calibration. We first compute class prototypes based on the support set. If classification is performed by directly assigning the query sample to the nearest prototype using Euclidean distance, it would be incorrectly assigned to the prototype $p_2$. However, after applying prototype calibration, the query sample can be correctly classified.}
    \label{framework}
\end{figure*}

\subsection{Adaptive Spectrum Expert}
Generally, the first step of FSNC is to learn expressive node embeddings. As previously discussed, existing graph FSL models adopt a fixed graph filter and fail to consider the diverse local connectivity patterns of individual nodes. To this end, we introduce an MoE-based architecture designed to adaptively capture different structural patterns across nodes. Given that real-world graph-structured data often exhibit either homophily or heterophily, we instantiate two experts to model these typical connectivity types: one with low-pass filtering characteristics to smooth node features under homophilic settings, and the other with high-pass filtering behavior to emphasize feature differences in heterophilic regions.

\subsubsection{The Low-Pass Expert}
It is widely recognized that graph convolutional networks (GCNs) \cite{kipf2016semi} function as low-pass filters \cite{wu2019simplifying, nt2019revisiting}, effectively capturing smooth node signals. Hence, we select GCNs as one of the experts. The core idea of GCNs is to iteratively aggregate information from the target node’s neighbors to update its representation. This process can be formally expressed as:
\begin{equation}
\label{low-pass}
    \mathbf{H}^{(\ell+1)} = \sigma(\tilde{\mathbf{D}}^{-\frac{1}{2}}\tilde{\mathbf{A}}\tilde{\mathbf{D}}^{-\frac{1}{2}}\mathbf{H}^{(\ell)}\mathbf{W}^{(\ell)}),
\end{equation}
where $\tilde{\mathbf{A}}=\mathbf{A}+\mathbf{I}$ is the adjacency matrix with added self-loops, and $\tilde{\mathbf{D}}$ is the corresponding degree matrix. $\mathbf{H}^{(\ell)}$ and $\mathbf{W}^{(\ell)}$ denote the node embeddings at layer $\ell$ and the learned weight matrix, respectively. $\mathbf{H}^{(0)}=\mathbf{X}$ is the initialized original node feature when $\ell=0$. Moreover, $\sigma(\cdot)$ is the non-linear activation function such as ReLU.

Through the low-pass expert, we can obtain the smoothed node representations $\mathbf{H}_\text{low} \in \mathbb R^{n \times d^\prime}$ that characterize homophilic connectivity patterns.

\subsubsection{The High-Pass Expert}
The high-pass expert is designed to amplify the feature differences between connected nodes, thus effectively capturing heterophilic structures where nodes with dissimilar labels are more likely to be linked. To achieve this, we design the following strategy. First, we employ a linear transformation to project the original node features $\mathbf{X}$ into the feature space where smoothed representations $\mathbf{H}_\text{low}$ reside. We then compute the difference $\mathbf{F}$ between the original and smoothed features, which explicitly captures the local discrepancy of the node. Finally, we apply an attention mechanism over the resulting differential features, assigning higher weights to neighboring nodes that are significantly different from the target node, thereby enhancing the model’s sensitivity to heterophilic connections. The above procedure can be defined as follows:
\begin{equation}
\label{high-pass-one}
    \mathbf{X}^\prime= \mathbf{X}\mathbf{W}^\prime, \quad \mathbf{F} = \mathbf{X}^\prime - \mathbf{H}_\text{low}, \quad \mathbf{F}=\text{LayerNorm}(\lambda\cdot \mathbf{F}),
\end{equation}
where $\mathbf{W}^\prime \in \mathbb R^{d\times d^\prime}$ is the trainable weight. To ensure stable model training, we apply a scaling factor $\lambda$ to the differential features to control their magnitude, followed by a layer normalization operation.
\begin{equation}
\label{high-pass-two}
    \begin{aligned}
        \mathbf{F}_Q=\mathbf{F}\mathbf{W}_Q, \quad  \mathbf{F}_K=\mathbf{F}\mathbf{W}_K, \quad \mathbf{F}_V=\mathbf{F}\mathbf{W}_V, \quad \mathbf{H}_\text{high}=\text{softmax}(\frac{\mathbf{F}_Q\mathbf{F}_K^\top}{\sqrt{d^\prime}})\mathbf{F}_V,
    \end{aligned}
\end{equation}
where $\mathbf{W}_Q$, $\mathbf{W}_K$, and $\mathbf{W}_V \in \mathbb R^{d^\prime\times d^\prime}$ are the projection matrices. $\mathbf{H}_\text{high} \in \mathbb R^{n \times d^\prime}$ is the desired high frequency feature, which measures heterophilic connectivity patterns.

\subsubsection{Gating Module}

To effectively integrate the outputs of the low-pass and high-pass experts, we employ a gating mechanism that adaptively assigns weights to each expert's output. Specifically, we concatenate the raw node features $\mathbf{X}$, the absolute difference between the node and its one-hop neighbors $\mathbf{N}=|\hat{\mathbf{A}}\mathbf{X}-\mathbf{X}|$, the feature-wise standard deviation $\phi$ of the raw node features, and the node degree $\mathbf{D}$ to form the composite representation $\mathbf{X}_g\in \mathbb R^{n\times 4d}$, which is then fed into the gating module. This design enables the gating module to dynamically allocate appropriate expert weights based on each node’s local topological structure. The procedure can be defined as follows:
\begin{equation}
\label{gating}
    \mathbf{X}_g=\mathbf{X} || \mathbf{N} ||  \phi || \mathbf{D}, \quad \alpha = \text{softmax}(\mathbf{X}_g\mathbf{W}_g/\tau), \quad\mathbf{Z} = \alpha_{:,0}\mathbf{H}_\text{low} + \alpha_{:, 1}\mathbf{H}_\text{high},
\end{equation}
where $\alpha \in \mathbb R^{n\times 2}$ is the gating weights and $\mathbf{W}_g\in \mathbb R^{4d\times 2}$ is the trainable weight. $\tau$ is the temperature parameter. $\mathbf{Z}\in \mathbb R^{n\times d^\prime}$ is the learned final node embeddings through the adaptive spectrum expert architecture.

\subsection{Cross-Set Distribution Calibration}
Due to the distributional discrepancy between the support and query sets within a meta-task, directly inferring the label of a query node as the nearest prototype in Euclidean space, as done in standard prototypical networks, can easily lead to suboptimal or erroneous decision boundaries. To this end, we propose a cross-set distribution calibration strategy. Specifically, we first compute class prototypes $\mathbf{P}\in \mathbb R^{n\times d^\prime}$ based on the support set, \textit{i.e.}, $\mathbf{P}_k=\frac{1}{K}\mathbb I[\mathbf{Y}_{t,i}=k]\mathbf{Z}_{t,i}^s$, where $\mathbb I[\cdot]$ is the indicator function. Next, inspired by the concept of kernel density estimation (KDE), we calibrate the class prototypes using samples located in high-density regions of the query distribution, which are considered to be more reliable, defined as follows:
\begin{equation}
\label{cross_one}
    \Delta = \mathbf{Z}^q_t-\mathbf{P},\quad \Psi = \text{softmax}(\exp(-\frac{||\Delta||^2}{2\sigma^2})),
\end{equation}
where $\Delta \in \mathbb R^{NM\times d^\prime}$ represents the feature difference between query samples and class prototypes. $\Psi\in \mathbb R^{NM\times d^\prime}$ denotes the weight that quantifies the contribution of the query sample to the prototype calibration. $\sigma$ is the bandwidth parameter of the kernel function, which controls the smoothness of the correction process. 

Next, we compute a correction vector by performing a weighted summation over the feature differences $\Delta$, using the normalized weights $\Psi$. The calibrated class prototype is then obtained based on this correction vector. The process can be formulated as:
\begin{equation}
\label{cross_two}
    \Delta \mathbf{P}= \Delta \odot \Psi, \quad \hat{\mathbf{P}}=\mathbf{P}+\hat{\beta}\Delta\mathbf{P},
\end{equation}
where $\hat{\beta}=0.5(\tanh(\beta)+1)$ is a trainable parameter that controls the magnitude of prototype calibration, in which $\beta$ is a predefined scalar value.

\subsection{Model Optimization}
After applying the adaptive spectral experts and the cross-set distribution calibration, we adopt the classical metric-based episodic training paradigm for FSNC. Specifically, we optimize the model parameters by minimizing a distance-based cross-entropy loss computed over the query sets of all meta-training tasks in $\mathcal{T}_\text{train}$. The objective can be formulated as follows:
\begin{equation}
\label{loss}
    \mathcal{L}=-\sum_{t=1}^T\sum_{i=1}^{NM}\mathbb I[\mathbf{Y}_{t,i}=k]\log\frac{\exp(-e(\mathbf{Z}^q_{t,i}\mathbf{W}_l, \hat{\mathbf{P}}_k))}{\sum\nolimits_{k^\prime}\exp(-e(\mathbf{Z}^q_{t,i}\mathbf{W}_l, \hat{\mathbf{P}}_{k^\prime}))},
\end{equation}
where 
$e(\cdot, \cdot)$ is the Euclidean function and $\mathbf{W}_l$ denotes the trainable vector. 

In the meta-testing phase, we also compute the calibrated prototypes using the same strategy as described in Eqs.\ref{cross_one} and \ref{cross_two}. Then, each query instance is assigned to the class of the nearest calibrated prototype based on Euclidean distance. Formally, the predicted label $\mathbf{Y}^q_i$ for a query instance is defined as follows:
\begin{equation}
\label{prediction}
    \mathbf{P}_k=\frac{1}{|\mathcal{S}_{\text{test},k}|}\sum_{(\mathbf{Z}_{\text{test},i}^s, \mathbf{Y}_{\text{test},i}^s)}\mathbb I[\mathbf{Y}_{\text{test},i}=k]\mathbf{Z}_{\text{test},i}^s,\; \hat{\mathbf{P}}=\text{Calibration}(\mathbf{P}), \; \mathbf{Y}_{\text{test},\ast}^q=\text{argmin}_ke(\mathbf{Z}\mathbf{W}_l, \mathbf{P}_k).
\end{equation}
We present the detailed training procedure and the complexity analysis of GRACE in \textbf{Appendices} \ref{training} and \ref{complexity}.
\section{Theoretical Analysis}
\label{Theoretical Analysis}
In this section, we theoretically analyze the effectiveness of the proposed model. Specifically, we present the following theorem to establish the connection between the model's generalization error and the employed techniques.

\begin{theorem}
\label{thm:main}
Suppose that the loss function $\mathcal{L}$ is $L$-Lipschitz continuous, and for $\epsilon_g>0$, the gating module satisfies $\mathbb{E}_{v \sim P_{\mathcal{V}}}\left[|\alpha_v - \mathbb{I}(d_v^{\mathrm{hom}} > d_v^{\mathrm{het}})|\right] \leq \epsilon_g$, where $\alpha_v$ denotes the gating weight for node $v$, and $d_v^{\mathrm{hom}}$, $d_v^{\mathrm{het}}$ measure the degrees of homophily and heterophily, respectively. Furthermore, for $\delta>0$, we assume that the Wasserstein distance between the support and query distributions satisfies $W_1(P_{\mathcal{S}}, P_{\mathcal{Q}}) \leq \delta$. Then the generalization error $\epsilon_{\text{gen}}$ of the proposed model is bounded by:
\begin{equation}
\label{eq:main-bound}
\epsilon_{\textit{gen}} \leq C_1\sqrt{\frac{\log T}{T}} + C_2\epsilon_g + 
C_3\left(\delta + \mathcal{O}(\sigma^2) +
\mathcal{O}(|\mathcal{Q}|^{-1/2})\right),
\end{equation}
where $C_1$, $C_2$, and $C_3$ are the constants. 
$T$ and $|\mathcal{Q}|$ are the number of training tasks and query samples. $\sigma$ is the bandwidth.
\end{theorem}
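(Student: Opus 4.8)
The plan is to decompose the generalization error $\epsilon_{\text{gen}}$ into three additive sources that match, term-by-term, the right-hand side of \eqref{eq:main-bound}, bound each source separately, and recombine via the triangle inequality. Writing $\epsilon_{\text{gen}}$ as the gap between the expected loss on a freshly drawn meta-testing task and the empirical risk minimized over $\mathcal{T}_{\text{train}}$, I would add and subtract two intermediate quantities: the expected training risk under the true task distribution, and an \emph{oracle} risk in which the gating module selects experts perfectly (i.e.\ $\alpha_v=\mathbb{I}(d_v^{\mathrm{hom}}>d_v^{\mathrm{het}})$) and the prototypes coincide with the ideal query-class centers. The first difference isolates the finite-task statistical error, the second the gating approximation error, and the third the prototype-calibration error.

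For the statistical term I would invoke a uniform-convergence argument over the $T$ i.i.d.\ meta-training tasks. Treating each task's query loss as bounded (Lipschitzness of $\mathcal{L}$ over a bounded embedding domain supplies this) and applying a McDiarmid/Hoeffding concentration inequality together with a union bound over a covering of the hypothesis class---equivalently a Rademacher-complexity bound---yields a deviation of order $\sqrt{\log T/T}$, giving the $C_1\sqrt{\log T/T}$ contribution. For the gating term I would exploit that the final embedding $\mathbf{Z}=\alpha_{:,0}\mathbf{H}_\text{low}+\alpha_{:,1}\mathbf{H}_\text{high}$ is affine in $\alpha$, so the $L$-Lipschitz loss induced by the learned gating differs from the oracle loss by at most $L$ times the expected embedding perturbation. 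The assumption $\mathbb{E}_{v\sim P_{\mathcal{V}}}\big[|\alpha_v-\mathbb{I}(d_v^{\mathrm{hom}}>d_v^{\mathrm{het}})|\big]\le\epsilon_g$ then bounds this by a constant multiple of $\|\mathbf{H}_\text{low}-\mathbf{H}_\text{high}\|\,\epsilon_g$, producing the $C_2\epsilon_g$ term.

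The calibration term is where the three sub-terms of $C_3$ arise, and I would analyze it through a bias--variance decomposition of the calibrated prototype $\hat{\mathbf{P}}$ against the ideal query center. The distribution-shift component is controlled by Kantorovich--Rubinstein duality: since the loss composed with the embedding is Lipschitz, $|\mathbb{E}_{P_{\mathcal{S}}}[f]-\mathbb{E}_{P_{\mathcal{Q}}}[f]|\le W_1(P_{\mathcal{S}},P_{\mathcal{Q}})\le\delta$, yielding the $\delta$ contribution. Interpreting the softmax weights $\Psi$ in \eqref{cross_one} as normalized kernel weights of bandwidth $\sigma$, a second-order Taylor expansion of the (assumed smooth) query density about each prototype gives the standard kernel-smoothing bias of order $\mathcal{O}(\sigma^2)$, while concentration of the $\Psi$-weighted empirical correction $\Delta\mathbf{P}$ around its population counterpart over $NM=|\mathcal{Q}|$ samples contributes the statistical error $\mathcal{O}(|\mathcal{Q}|^{-1/2})$. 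Multiplying the resulting prototype error by the Lipschitz constant and collecting constants produces the $C_3\big(\delta+\mathcal{O}(\sigma^2)+\mathcal{O}(|\mathcal{Q}|^{-1/2})\big)$ term.

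I expect the main obstacle to be making the kernel-smoothing analysis rigorous, because the calibration in \eqref{cross_one}--\eqref{cross_two} is a softmax-normalized Gaussian reweighting rather than a textbook KDE estimator. Establishing the $\mathcal{O}(\sigma^2)$ bias cleanly requires identifying the correction vector $\Delta\mathbf{P}$ with a kernel-weighted mean, imposing a smoothness (e.g.\ twice-differentiable, bounded-Hessian) assumption on the query density, and tracking how the softmax normalization interacts with the exponential weights so that the leading-order bias is genuinely quadratic in $\sigma$. A secondary difficulty is ensuring the three sub-errors combine additively rather than multiplicatively---i.e.\ verifying that the calibration's reduction of the shift $\delta$ does not inflate the variance term---which I would handle by bounding the cross terms with Cauchy--Schwarz.
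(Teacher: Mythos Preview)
Your proposal is correct and follows essentially the same route as the paper: the same three-way additive decomposition into a task-level statistical term, a gating-approximation term, and a distribution-calibration term, each bounded by the same tools (Rademacher complexity for the first, Lipschitz perturbation of the affine-in-$\alpha$ embedding for the second, and Kantorovich--Rubinstein duality plus a KDE bias--variance split for the third). The only cosmetic difference is that the paper invokes a specific Rademacher bound for ensembles together with Talagrand's contraction lemma to obtain the $\sqrt{\log T/T}$ rate, whereas you reach it via a concentration-plus-covering argument; your concerns about the softmax normalization and cross terms are in fact more careful than the paper's own treatment, which simply asserts the standard KDE bias and variance orders.
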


Theorem \ref{thm:main} indicates that, given a fixed number of training tasks and query samples, our model achieves a tighter generalization error bound compared to that of standard approaches. This improvement is attributed to the reduction of the discrepancy measures $\epsilon_g$ and $\delta$, which is accomplished through the use of adaptive spectrum experts and the cross-domain distribution calibration strategy.

Moreover, we can derive the following corollary to further illustrate the advantage of the adaptive spectral experts.

\begin{corollary}
\label{coro_expert}
When local topology exhibits strong heterogeneity ($\epsilon_g\rightarrow0$), our model achieves strictly better bound $\epsilon_{\text{gen}}^{\text{MoE}}$ than that of single-filter methods $\epsilon_{\text{gen}}^{\text{Sin}}$:
\begin{equation}
    \epsilon_{\text{gen}}^{\text{MoE}} \leq \epsilon_{\text{gen}}^{\text{Sin}} + \mathcal{O}(|\mathcal{Q}|^{-1/2}).
\end{equation}
\end{corollary}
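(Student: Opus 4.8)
The plan is to obtain the corollary directly from Theorem~\ref{thm:main} by instantiating its bound for two hypothesis classes and then differencing them. Here I read $\epsilon_{\text{gen}}^{\text{MoE}}$ and $\epsilon_{\text{gen}}^{\text{Sin}}$ as the right-hand side of \eqref{eq:main-bound} evaluated for the full GRACE model and for a single-filter baseline, respectively, which is exactly the ``bound'' the corollary refers to. I would model the single-filter baseline as the degenerate special case of GRACE in which the gate in \eqref{gating} is clamped to a constant, e.g.\ $\alpha_v \equiv 1$ for every node $v$ (selecting the low-pass expert everywhere; the high-pass case is symmetric). Because both models are trained on the same $T$ meta-tasks, share the identical cross-set calibration module, and use the same bandwidth $\sigma$ and query size $|\mathcal{Q}|$, every term of \eqref{eq:main-bound} except the gating-discrepancy term $C_2\epsilon_g$ is common to the two instances.

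First I would evaluate the gating-discrepancy term for each model. For the adaptive gate the two-way softmax can match the target indicator $\mathbb{I}(d_v^{\mathrm{hom}} > d_v^{\mathrm{het}})$ on a per-node basis, so under strong local heterogeneity a well-trained gate drives $\epsilon_g^{\text{MoE}} \to 0$ by hypothesis. For the clamped baseline, substituting $\alpha_v \equiv 1$ into the discrepancy definition gives
\[
\epsilon_g^{\text{Sin}} = \mathbb{E}_{v \sim P_{\mathcal{V}}}\!\left[\bigl|1 - \mathbb{I}(d_v^{\mathrm{hom}} > d_v^{\mathrm{het}})\bigr|\right] = P_{\mathcal{V}}\!\left(d_v^{\mathrm{hom}} \le d_v^{\mathrm{het}}\right) =: \gamma,
\]
the measure of heterophilic nodes. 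The structural fact I would then establish is that strong heterogeneity forces $\gamma$ to be a strictly positive constant bounded away from $0$, so any fixed filter incurs an irreducible penalty $C_2\gamma$ that the adaptive gate removes.

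Next I would subtract the two instances of \eqref{eq:main-bound}. The leading statistical term $C_1\sqrt{\log T / T}$ and the entire calibration block $C_3(\delta + \mathcal{O}(\sigma^2) + \mathcal{O}(|\mathcal{Q}|^{-1/2}))$ cancel, leaving $C_2(\epsilon_g^{\text{MoE}} - \epsilon_g^{\text{Sin}}) = -C_2\gamma + o(1)$. The only additional cost carried by the MoE is the enlargement of the hypothesis class by the gating parameters $\mathbf{W}_g$; bounding the Rademacher complexity (equivalently, the covering number) of the two-way softmax gate over the query sample contributes an additive $\mathcal{O}(|\mathcal{Q}|^{-1/2})$ to $\epsilon_{\text{gen}}^{\text{MoE}}$. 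Folding this in gives $\epsilon_{\text{gen}}^{\text{MoE}} \le \epsilon_{\text{gen}}^{\text{Sin}} - C_2\gamma + \mathcal{O}(|\mathcal{Q}|^{-1/2})$, and since $-C_2\gamma \le 0$ the stated inequality follows; the MoE bound is strictly smaller precisely when the heterogeneity penalty $C_2\gamma$ dominates the residual complexity term.

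The main obstacle I anticipate is not the cancellation, which is routine, but two finer points. The first is giving a precise formalization of ``strong heterogeneity'' under which $\gamma$ is provably bounded below, which requires tying the abstract quantities $d_v^{\mathrm{hom}}$ and $d_v^{\mathrm{het}}$ back to each node's local link distribution so that a constant fraction of nodes genuinely violate the single filter's assumption. The second is controlling the extra gating complexity by exactly $\mathcal{O}(|\mathcal{Q}|^{-1/2})$: this demands a uniform-convergence argument for the gating class whose hidden constants do not reintroduce a dependence large enough to swamp the $C_2\gamma$ gain. I would handle the latter by exploiting that the gate is a fixed-dimensional linear-softmax map, so its complexity depends on the input dimension but not on the number of experts, keeping the residual at the claimed rate.
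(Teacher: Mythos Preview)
Your argument reaches the same destination as the paper's, but the route diverges in two places that are worth flagging.

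First, you model the single-filter baseline as a degenerate GRACE with a clamped gate \emph{and} the same calibration module, so the entire $C_3(\delta+\mathcal{O}(\sigma^2)+\mathcal{O}(|\mathcal{Q}|^{-1/2}))$ block cancels in the difference. The paper instead characterises the baseline as a single-filter method \emph{without} calibration, so its bound in Step~1 is only $C_1\sqrt{\log T/T}+C_2\epsilon_g^{\mathrm{Sin}}$; when the two bounds are differenced, the calibration block survives on the MoE side, and the residual $\mathcal{O}(|\mathcal{Q}|^{-1/2})$ is simply the KDE variance term from Lemma~\ref{lemma:distcalib} (with the $\delta$ and $\sigma^2$ contributions implicitly absorbed). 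Your cancellation is cleaner, but it changes what the baseline is.

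Second, because your calibration terms cancel, you need a separate source for the $\mathcal{O}(|\mathcal{Q}|^{-1/2})$ and you locate it in the extra Rademacher complexity of the gating class over the query sample. That is a different mechanism from the paper's, and it is not directly supported by Theorem~\ref{thm:main}: there the complexity term $C_1\sqrt{\log T/T}$ scales with the number of tasks $T$, not with $|\mathcal{Q}|$, so importing a per-query uniform-convergence term for the gate would require an additional lemma outside the theorem's decomposition. The paper sidesteps this by keeping the calibration term asymmetric between the two models.

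On the gating side your treatment is actually more explicit than the paper's: you compute $\epsilon_g^{\mathrm{Sin}}=P_{\mathcal{V}}(d_v^{\mathrm{hom}}\le d_v^{\mathrm{het}})=\gamma$ and argue that strong heterogeneity forces $\gamma$ strictly positive, whereas the paper simply asserts $\epsilon_g^{\mathrm{Sin}}\ge\epsilon_g$ and lets $\epsilon_g\to 0$ do the rest. Either route makes $C_2(\epsilon_g-\epsilon_g^{\mathrm{Sin}})\le 0$, which is the only inequality actually needed.
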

Corollary \ref{coro_expert} indicates that the generalization error of GRACE is clearly lower than that of models using a single filter. 
The proofs of Theorem \ref{thm:main} and Corollary \ref{coro_expert} can be found in \textbf{Appendix} \ref{proofs}.
\section{Experiments}
\subsection{Datasets}

To empirically validate the effectiveness of our proposed model, we utilize several widely adopted datasets for FSNC tasks, including \textbf{Cora} \cite{yang2016revisiting}, \textbf{CiteSeer} \cite{yang2016revisiting}, \textbf{Amazon-Computer} \cite{shchur2018pitfalls}, \textbf{Coauthor-CS} \cite{shchur2018pitfalls}, \textbf{DBLP} \cite{tang2008arnetminer}, \textbf{CoraFull} \cite{bojchevski2017deep}, and, a large-scale dataset, \textbf{ogbn-arxiv} \cite{hu2020open}. The statistics of these datasets are presented in Table \ref{dataset}. We present detailed descriptions of these adopted datasets in \textbf{Appendix} \ref{dataset_description}.

\begin{wraptable}{r}{0.5\textwidth}
\centering
\vspace{-1em}
\caption{Statistics of the evaluated datasets.}
\label{dataset}
\resizebox{0.5\textwidth}{!}{%
\begin{tabular}{lcccc}
\toprule
Dataset         & \#Nodes  & \#Edges   & \#Features & \#Labels \\
\midrule
Cora            & 2,708    & 5,278     & 1,433      & 7        \\
CiteSeer        & 3,327    & 4,552     & 3,703      & 6        \\
Amazon-Computer & 13,381   & 245,778   & 767        & 10       \\
Coauthor-CS     & 18,333   & 81,894    & 6,805      & 15       \\
DBLP            & 40,672   & 144,135   & 7,202      & 137      \\
CoraFull        & 19,793   & 65,311    & 8,710      & 70       \\
ogbn-arxiv      & 169,343  & 1,166,243 & 128        & 40       \\
\bottomrule
\end{tabular}%
}
\end{wraptable}

\subsection{Baselines}


To comprehensively evaluate the effectiveness of the proposed model, we compare it against the following three representative categories of baselines. \textit{Graph embedding methods} contain \textbf{DeepWalk} \cite{perozzi2014deepwalk}, \textbf{node2vec} \cite{grover2016node2vec}, \textbf{GCN} \cite{kipf2016semi}, and \textbf{SGC} \cite{wu2019simplifying}. \textit{Meta-learning methods} include \textbf{ProtoNet} \cite{snell2017prototypical} and \textbf{MAML} \cite{finn2017model}. \textit{Graph meta-learning methods} consist of \textbf{GPN} \cite{ding2020graph}, \textbf{G-Meta} \cite{huang2020graph}, \textbf{TENT} \cite{wang2022task}, \textbf{Meta-GPS} \cite{liu2022few}, \textbf{TEG} \cite{kim2023task}, \textbf{COSMIC} \cite{wang2023contrastive}, and \textbf{Meta-BP} \cite{zhang2025unlocking}. Detailed descriptions of these baselines are presented in \textbf{Appendix} \ref{baseline_description}.

\subsection{Implementation Details}

\label{Implementation Details} 
In the stage of \textit{adaptive spectrum experts}, the low-pass expert is implemented using a two-layer GCN, with each layer followed by batch normalization and a ReLU activation. For the high-pass expert, the dimensions of all projection matrices are uniformly set to 32, \textit{i.e.}, $d^\prime=32$. The gating network is implemented as a two-layer fully connected network, with the hidden dimension of 96. The temperature $\tau$ in Eq.\ref{gating} is set to 2. Additionally, in the \textit{cross-set distribution calibration} stage, the Gaussian kernel bandwidth $\sigma$ is set to 1. During training, we use the Adam optimizer \cite{kingma2014adam} with an initial learning rate of 0.001. 
For evaluation, we randomly generate multiple meta-testing tasks from the test set. Specifically, 100 tasks are sampled per evaluation, with each task comprising 10 query samples. To ensure the fairness and stability of our results, we conduct 5 independent experiments and report the average accuracy, standard deviation, and 95\% confidence interval across these runs. All experiments are carried out on an NVIDIA 3090Ti GPU to maintain consistent computational conditions and reproducibility.
\section{Results}

\noindent \textbf{Model Performance.}
We conduct extensive experiments across various few-shot settings on multiple datasets. As shown in Tables \ref{result-1}, \ref{result-2}, and \ref{result-3}, our proposed model GRACE consistently achieves superior performance under all experimental configurations, demonstrating its effectiveness for graph FSL compared to other competitive baselines. We attribute the performance improvements to the two key components introduced in our model. The adaptive spectrum experts module leverages an MoE architecture to assign different weights to high-pass and low-pass experts based on the local topology of each target node. This design mitigates the limitations of using a single graph filter, which may fail to accommodate diverse structural patterns. Moreover, the cross‐set distribution calibration module leverages the distributional characteristics of high‐density samples in the query set to explicitly calibrate the support‐set prototypes, effectively narrowing the support–query distribution gap and improving the discriminative power of the classification boundary.

Moreover, it is evident that graph meta-learning models significantly outperform other types of baselines, owing to their tailored designs for addressing the challenges inherent in graph FSL tasks. In contrast, graph embedding methods and conventional meta-learning models exhibit unsatisfactory performance. This performance gap can be attributed to two main reasons: the former lacks mechanisms to cope with the scarcity of labeled nodes and is prone to overfitting, while the latter fails to exploit the inherent structural information of graphs.

\begin{table*}[!ht]
\centering
\tiny
\caption{Accuracies (\%) of different models on the three datasets.}
\label{result-1}
\resizebox{\textwidth}{!}{%
\begin{tabular}{l|ccc|ccc|ccc}
\toprule
\multirow{2}{*}{Model} & \multicolumn{3}{c|}{Cora}                        & \multicolumn{3}{c|}{CiteSeer}                    & \multicolumn{3}{c}{Amazon-Computer}              \\ \cmidrule{2-10} 
                       & 2 way 1 shot   & 2 way 3 shot   & 2 way 5 shot   & 2 way 1 shot   & 2 way 3 shot   & 2 way 5 shot   & 2 way 1 shot   & 2 way 3 shot   & 2 way 5 shot   \\ 
                       \midrule
DeepWalk               & 32.95$\pm$2.70          & 36.70$\pm$2.99 & 41.51$\pm$2.70 & 39.56$\pm$2.79          & 39.72$\pm$3.42 & 43.22$\pm$3.19 & 46.49$\pm$2.35          & 49.29$\pm$2.46          & 51.24$\pm$2.72          \\
node2vec               & 31.17$\pm$3.16          & 35.66$\pm$2.79 & 40.69$\pm$2.90 & 40.12$\pm$3.15          & 42.39$\pm$2.79 & 47.20$\pm$2.92 & 49.25$\pm$2.56          & 51.46$\pm$2.25          & 53.49$\pm$2.69          \\
GCN                    & 55.46$\pm$2.16          & 69.96$\pm$2.52 & 67.95$\pm$2.36 & 51.95$\pm$2.45          & 53.79$\pm$2.39 & 55.76$\pm$2.56 & 60.16$\pm$2.20          & 63.46$\pm$2.16          & 67.39$\pm$2.46          \\
SGC                    & 56.75$\pm$2.31          & 70.15$\pm$1.99 & 70.67$\pm$2.11 & 53.72$\pm$2.55          & 55.12$\pm$2.59 & 57.25$\pm$2.79 & 61.29$\pm$2.45          & 65.39$\pm$2.06          & 69.35$\pm$2.12          \\
\midrule
ProtoNet               & 50.39$\pm$2.52          & 52.67$\pm$2.28 & 57.92$\pm$2.34 & 49.15$\pm$2.29          & 52.19$\pm$2.96 & 53.75$\pm$2.49 & 57.15$\pm$2.55          & 60.49$\pm$2.09          & 65.12$\pm$2.69          \\
MAML                   & 52.40$\pm$2.29          & 55.07$\pm$2.36 & 57.39$\pm$2.23 & 49.15$\pm$2.25          & 52.75$\pm$2.75 & 54.36$\pm$2.39 & 53.72$\pm$2.25          & 59.20$\pm$2.55          & 61.20$\pm$2.59          \\
\midrule
Meta-GNN               & 58.82$\pm$2.56          & 70.40$\pm$2.64 & 72.51$\pm$1.91 & 55.45$\pm$2.15          & 59.71$\pm$2.79 & 61.32$\pm$3.22 & 62.36$\pm$2.70          & 67.49$\pm$2.11          & 70.15$\pm$2.16          \\
GPN                    & 60.12$\pm$2.12          & 74.05$\pm$1.96 & 76.39$\pm$2.33 & 57.36$\pm$2.20          & 64.22$\pm$2.92 & 65.59$\pm$2.49 & 65.56$\pm$2.60          & 72.19$\pm$2.30          & 76.19$\pm$2.21          \\
G-Meta                 & 59.72$\pm$3.15          & 74.39$\pm$2.69 & 80.05$\pm$1.98 & 54.39$\pm$2.19          & 57.59$\pm$2.42 & 62.49$\pm$2.30 & 64.56$\pm$3.10          & 69.49$\pm$2.42          & 73.50$\pm$2.92          \\
TENT                   & 55.39$\pm$2.16          & 58.25$\pm$2.23 & 66.75$\pm$2.19 & 60.03$\pm$3.11          & 65.20$\pm$3.19 & 67.59$\pm$2.95 & 80.75$\pm$2.95          & 85.32$\pm$2.10          & 89.22$\pm$2.16          \\
Meta-GPS               & 62.19$\pm$2.12          & 80.29$\pm$2.15 & 83.79$\pm$2.10 & 58.95$\pm$2.12          & 69.95$\pm$2.02 & 72.56$\pm$2.06 & 82.12$\pm$2.55          & 87.10$\pm$2.65          & 90.16$\pm$2.05          \\
X-FNC                  & 61.47$\pm$2.99          & 78.19$\pm$3.25 & 82.70$\pm$3.19 & 58.79$\pm$2.56          & 67.96$\pm$3.10 & 70.29$\pm$3.05 & 81.50$\pm$2.29          & 86.39$\pm$2.29          & 90.25$\pm$2.26          \\
TEG                    & 62.52$\pm$2.95          & \underline{80.65$\pm$1.53} & 84.50$\pm$2.01 & 59.70$\pm$2.69          & \underline{73.79$\pm$1.59} 
& \underline{76.79$\pm$2.12} & \underline{86.49$\pm$2.10}          & 89.02$\pm$2.57          & \underline{92.40$\pm$2.05}          \\
COSMIC                 & \underline{63.16$\pm$2.47}          & 65.37$\pm$2.49 & 69.10$\pm$2.30 & 60.95$\pm$2.75          & 70.22$\pm$2.56 & 75.10$\pm$2.30 & 85.49$\pm$2.46          & 88.26$\pm$2.02          & 91.59$\pm$2.59          \\
TLP                    & 60.19$\pm$2.25          & 71.10$\pm$1.66 & \underline{85.15$\pm$2.19} & \underline{61.12$\pm$2.10}          & 71.10$\pm$2.17 & 75.55$\pm$2.03 
& 83.35$\pm$2.07          & \underline{89.49$\pm$2.06}          & 92.09$\pm$2.12          \\
Meta-BP & 66.42$\pm$4.12 & 76.32$\pm$4.30 & 83.12$\pm$4.16  & 60.15$\pm$2.45 & 72.19$\pm$3.19 & 76.11$\pm$3.29 & 86.10$\pm$4.10 & 89.22$\pm$4.29 & 92.39$\pm$4.45 \\
\midrule
\textbf{GRACE}         & \textbf{66.48$\pm$2.88} & \textbf{82.40$\pm$2.03}  & \textbf{86.19$\pm$1.80} & \textbf{63.90$\pm$2.84} & \textbf{75.67$\pm$2.44} & \textbf{79.64$\pm$1.79} 
& \textbf{90.23$\pm$0.90} & \textbf{92.46$\pm$0.55} & \textbf{94.66$\pm$0.50} \\ 
\bottomrule
\end{tabular}%
}
\end{table*}

\begin{table*}[!ht]
\centering
\tiny
\caption{Accuracies (\%) of different models on the two datasets.}
\label{result-2}
\resizebox{\textwidth}{!}{%
\begin{tabular}{l|cccc|cccc}
\toprule
\multirow{2}{*}{Model} & \multicolumn{4}{c|}{Coauthor-CS}         & \multicolumn{4}{c}{DBLP} \\ 
\cmidrule{2-9} 
     & 2 way 3 shot         & 2 way 5 shot   & 5 way 3 shot   & 5 way 5 shot   & 5 way 3 shot   & 5 way 5 shot   & 10 way 3 shot   & 10 way 5 shot   \\ 
     \midrule
DeepWalk               & 59.52$\pm$2.72 & 63.12$\pm$3.12 & 33.76$\pm$3.21 & 40.15$\pm$2.96
& 49.12$\pm$2.25          & 59.12$\pm$2.32          & 37.11$\pm$2.19          & 49.16$\pm$2.39          \\
node2vec               & 56.16$\pm$4.19 & 60.22$\pm$4.06 & 30.35$\pm$3.93 & 39.16$\pm$3.79
& 45.65$\pm$2.79          & 55.92$\pm$2.36          & 35.72$\pm$2.52          & 46.19$\pm$2.75          \\
GCN                    & 73.52$\pm$1.97 & 77.20$\pm$3.01 & 52.19$\pm$2.31 & 56.35$\pm$2.99
& 64.12$\pm$2.15          & 67.26$\pm$2.39          & 42.16$\pm$2.39          & 56.12$\pm$2.10          \\
SGC                   & 75.49$\pm$2.15 & 79.63$\pm$2.01 & 56.39$\pm$2.26 & 59.25$\pm$2.16
& 66.32$\pm$2.25          & 70.19$\pm$2.36          & 40.19$\pm$2.26          & 55.16$\pm$2.56          \\  \midrule
ProtoNet               & 71.18$\pm$3.82 & 75.51$\pm$3.19 & 47.71$\pm$3.92 & 51.66$\pm$2.51
& 59.95$\pm$2.56          & 62.95$\pm$2.72          & 32.35$\pm$1.62          & 52.95$\pm$1.90          \\
MAML                  & 62.32$\pm$4.60 & 65.20$\pm$4.20 & 36.99$\pm$4.32 & 42.12$\pm$2.43
& 55.05$\pm$2.30          & 60.67$\pm$2.41          & 29.59$\pm$2.90          & 40.22$\pm$2.61          \\
\midrule
Meta-GNN               & 85.76$\pm$2.74 & 87.86$\pm$4.79 & 75.87$\pm$3.88 & 68.59$\pm$2.59
& 73.41$\pm$3.20          & 77.95$\pm$3.12          & 65.22$\pm$2.79          & 69.12$\pm$2.51          \\
GPN                    & 85.60$\pm$2.15 & 88.70$\pm$2.21 & 75.88$\pm$2.75 & 81.79$\pm$3.18
& 75.39$\pm$3.41          & 79.90$\pm$2.62          & 67.20$\pm$2.40          & 71.12$\pm$1.87          \\
G-Meta                & 92.14$\pm$3.90 & \underline{93.90$\pm$3.18} & 75.72$\pm$3.59 & 74.18$\pm$3.29
& 76.49$\pm$3.29          & 80.12$\pm$2.46          & 68.95$\pm$2.70          & 72.19$\pm$2.11          \\
TENT                  & 89.35$\pm$4.49 & 90.90$\pm$4.24 & 78.38$\pm$5.21 & 78.56$\pm$4.42
& 78.22$\pm$2.10          & 81.30$\pm$2.02          & 69.52$\pm$2.16          & 73.20$\pm$1.95          \\
Meta-GPS              & 90.16$\pm$2.72 & 92.39$\pm$1.66 & 81.39$\pm$2.35 & 83.66$\pm$1.79
& 79.12$\pm$1.92          & 81.66$\pm$2.16          & 70.16$\pm$2.20          & 73.59$\pm$1.26          \\
X-FNC                  & 90.95$\pm$4.29 & 92.03$\pm$4.14 & \underline{82.93$\pm$2.02} & 84.36$\pm$3.49
& 77.45$\pm$2.39          & 80.69$\pm$2.52          & 69.72$\pm$2.39          & 72.95$\pm$1.76          \\
TEG                    & \underline{92.36$\pm$1.59} & 93.02$\pm$1.24 & 80.78$\pm$1.40 & 84.70$\pm$1.42
& \underline{79.26$\pm$2.49}          & \underline{82.19$\pm$2.40}          & \underline{72.49$\pm$2.12}          & \underline{73.99$\pm$2.55}          \\
COSMIC                 & 89.35$\pm$4.49 & 93.32$\pm$1.93 & 78.38$\pm$5.21 & \underline{85.47$\pm$1.42}
& 78.34$\pm$2.06          & 81.81$\pm$2.05          & 66.53$\pm$1.54          & 70.09$\pm$1.53          \\
TLP                    & 90.35$\pm$4.49 & 90.90$\pm$4.24 & 82.30$\pm$2.05 & 78.56$\pm$4.42
& 77.49$\pm$3.22          & 81.95$\pm$2.39          & 71.49$\pm$2.35          & 73.16$\pm$2.30          \\
Meta-BP & 91.19$\pm$2.21                                                     & 92.32$\pm$2.11                                                     & 81.35$\pm$2.02                                                     & 82.12$\pm$2.15                                                     & 78.22$\pm$2.10                                                     & 81.13$\pm$2.55                                                     & 71.30$\pm$2.12                                                     & 73.15$\pm$2.39                                                     \\
\midrule
\textbf{GRACE}         & \textbf{95.50$\pm$1.30} & \textbf{96.20$\pm$0.97} & \textbf{86.03$\pm$1.05} & \textbf{86.82$\pm$1.01} 
& \textbf{81.72$\pm$2.05} & \textbf{85.30$\pm$1.90} & \textbf{74.22$\pm$1.56} & \textbf{76.70$\pm$1.46} \\ 
\bottomrule
\end{tabular}%
}
\end{table*}

\begin{table*}[!ht]
\centering
\tiny
\caption{Accuracies (\%) of different models on the two datasets.}
\label{result-3}
\resizebox{\textwidth}{!}{%
\begin{tabular}{l|cccc|cccc}
\toprule
\multirow{2}{*}{Model} & \multicolumn{4}{c|}{CoraFull} & \multicolumn{4}{c}{ogbn-arxiv} \\
 \cmidrule{2-9} 
 & 5 way 3 shot                                                       & 5 way 5 shot                                                       & 10 way 3 shot                                                      & 10 way 5 shot                                                      & 5 way 3 shot                                                       & 5 way 5 shot                                                       & 10 way 3 shot                                                      & 10 way 5 shot                                                      \\ \midrule
DeepWalk               & 23.62$\pm$3.99                                                     & 25.93$\pm$3.45                                                     & 15.32$\pm$4.12                                                     & 17.03$\pm$3.73                                                     & 24.12$\pm$3.16                                                     & 26.16$\pm$2.95                                                     & 20.19$\pm$2.35                                                     & 23.76$\pm$3.02                                                     \\
node2vec               & 23.75$\pm$2.93                                                     & 25.42$\pm$3.61                                                     & 13.90$\pm$3.32                                                     & 15.21$\pm$2.64                                                     & 25.29$\pm$2.96                                                     & 27.39$\pm$2.56                                                     & 22.99$\pm$3.15                                                     & 25.95$\pm$3.12                                                     \\
GCN                    & 34.65$\pm$2.76                                                     & 39.83$\pm$2.49                                                     & 29.23$\pm$3.25                                                     & 34.14$\pm$2.15                                                     & 32.26$\pm$2.11                                                     & 36.29$\pm$2.39                                                     & 30.21$\pm$1.95                                                     & 33.96$\pm$1.59                                                     \\
SGC                    & 39.56$\pm$3.52                                                     & 44.53$\pm$2.92                                                     & 35.12$\pm$2.71                                                     & 39.53$\pm$3.32                                                     & 35.19$\pm$2.76                                                     & 39.76$\pm$2.95                                                     & 31.99$\pm$2.12                                                     & 35.22$\pm$2.52                                                     \\ \midrule
ProtoNet               & 33.67$\pm$2.51                                                     & 36.53$\pm$3.76                                                     & 24.90$\pm$2.03                                                     & 27.24$\pm$2.67                                                     & 39.99$\pm$3.28                                                     & 47.31$\pm$1.71                                                     & 32.79$\pm$2.22                                                     & 37.19$\pm$1.92                                                     \\
MAML                   & 37.12$\pm$3.16                                                     & 47.51$\pm$3.09                                                     & 26.61$\pm$2.19                                                     & 31.60$\pm$2.91                                                     & 28.35$\pm$1.49                                                     & 29.09$\pm$1.62                                                     & 30.19$\pm$2.97                                                     & 36.19$\pm$2.29                                                     \\ \midrule
Meta-GNN               & 52.23$\pm$2.41                                                     & 59.12$\pm$2.36                                                     & 47.21$\pm$3.06                                                     & 53.32$\pm$3.15                                                     & 40.14$\pm$1.94                                                     & 45.52$\pm$1.71                                                     & 35.19$\pm$1.72                                                     & 39.02$\pm$1.99                                                     \\
GPN                    & 53.24$\pm$2.33                                                     & 60.31$\pm$2.19                                                     & 50.93$\pm$2.30                                                     & 56.21$\pm$2.09                                                     & 42.81$\pm$2.34                                                     & 50.50$\pm$2.13                                                     & 37.36$\pm$1.99                                                     & 42.16$\pm$2.19                                                     \\
G-Meta                 & 57.52$\pm$3.91                                                     & 62.43$\pm$3.11                                                     & 53.92$\pm$2.91                                                     & 58.10$\pm$3.02                                                     & 40.48$\pm$1.70                                                     & 47.16$\pm$1.73                                                     & 35.49$\pm$2.12                                                     & 40.95$\pm$2.70                                                     \\
TENT                   & 64.80$\pm$4.10                                                     & 69.24$\pm$4.49                                                     & 51.73$\pm$4.34                                                     & 56.00$\pm$3.53                                                     & 50.26$\pm$1.73                                                     & 61.38$\pm$1.72                                                     & 42.19$\pm$1.16                                                     & 46.29$\pm$1.29                                                     \\
Meta-GPS               & 65.19$\pm$2.35                                                     & 69.25$\pm$2.52                                                     & 61.23$\pm$3.11                                                     & 64.22$\pm$2.66                                                     & 52.16$\pm$2.01                                                     & 62.55$\pm$1.95                                                     & 42.96$\pm$2.02                                                     & 46.86$\pm$2.10                                                     \\
X-FNC                  & 69.32$\pm$3.10                                                     & 71.26$\pm$4.19                                                     & 49.63$\pm$4.45                                                     & 53.00$\pm$3.93                                                     & 52.36$\pm$2.75                                                     & 63.19$\pm$2.22                                                     & 41.92$\pm$2.72                                                     & 46.10$\pm$2.16                                                     \\
TEG                    & 72.14$\pm$1.06                                                     & 76.20$\pm$1.39                                                     & 61.03$\pm$1.13                                                     & 65.56$\pm$1.03                                                     & \underline{57.35$\pm$1.14}                            & 62.07$\pm$1.72                                                     & \underline{47.41$\pm$0.63}                            & \underline{51.11$\pm$0.73}                            \\
COSMIC                 & \underline{73.03$\pm$1.78}                            & \underline{77.24$\pm$1.52}                            & \underline{65.79$\pm$1.36}                            & \underline{70.06$\pm$1.93}                            & 52.98$\pm$2.19                                                     & \underline{65.42$\pm$1.69}                            & 43.19$\pm$2.72                                                     & 47.59$\pm$2.19                                                     \\
TLP                    & 66.32$\pm$2.10                                                     & 71.36$\pm$4.49                                                     & 51.73$\pm$4.34                                                     & 56.00$\pm$3.53                                                     & 41.96$\pm$2.29                                                     & 52.99$\pm$2.05                                                     & 39.42$\pm$2.15                                                     & 42.62$\pm$2.09                                                     \\ 
Meta-BP & 72.90$\pm$1.90                                                     & 74.36$\pm$2.19                                                     & 62.35$\pm$2.27                                                     & 67.26$\pm$2.59                                                     & 55.12$\pm$4.12                                                     & 65.39$\pm$4.55                                                     & 46.25$\pm$4.52                                                     & 50.12$\pm$3.39                                                     \\
\midrule
\textbf{GRACE}         & \textbf{78.22$\pm$1.38} & \textbf{81.60$\pm$1.28} & \textbf{70.91$\pm$1.08} & \textbf{74.54$\pm$0.98} & \textbf{62.31$\pm$1.94} & \textbf{68.34$\pm$1.73} & \textbf{50.18$\pm$1.01} & \textbf{55.07$\pm$0.91} \\ 
\bottomrule
\end{tabular} %
}
\end{table*}


\noindent \textbf{Ablation Study.}
To validate the effectiveness of the adopted strategies, we design multiple model variants under different few-shot settings on different datasets. (I) \textit{w/o high}: We exclude the high-pass expert. (II) \textit{w/o low}: We discard the low-pass expert. (III) \textit{w/o cal}: We eliminate cross-set distribution calibration. (IV) \textit{w/o both}: We omit both adaptive spectrum expert and cross-set distribution calibration, resulting in a variant that follows the standard training paradigm of graph meta-learning models. We present the ablation results in Table \ref{Ablation}, with additional results provided in \textbf{Appendix} \ref{Ablation_Study}.

Based on the results, we have the following in-depth analysis. (I) Our proposed GRACE outperforms all four variants, which validates the necessity of the proposed modules. (II) It can be observed that the variant without the cross-set distribution calibration generally exhibits inferior performance. This validates that this strategy can succeed in reducing the distributional discrepancy between the support and query sets within the meta-task. (III) Since real-world graphs often exhibit diverse local connectivity patterns, relying solely on a high-pass or low-pass expert leads to suboptimal performance, highlighting the advantage of GRACE’s adaptive combination of both.

\begin{table*}[!ht]
\centering
\tiny
\caption{Results of different model variants on all datasets.}
\label{Ablation}
\resizebox{\textwidth}{!}{%
\begin{tabular}{l|ccccccc}
\toprule
\multirow{2}{*}{Model} & Cora                               & CiteSeer                           & Amazon-Computer                    & Coauthor-CS                        & DBLP                               & CoraFull                           & ogbn-arxiv                         \\ \cmidrule{2-8} 
                       & 2 way 1 shot                       & 2 way 1 shot                       & 2 way 1 shot                       & 5 way 3 shot                       & 5 way 5 shot                       & 5 way 3 shot                       & 5 way 3 shot                       \\ \midrule
\textit{w/o high}          & 64.01$\pm$2.67                   & 62.26$\pm$2.60                   & 82.89$\pm$2.13                   & 79.05$\pm$1.23                   & 79.35$\pm$2.00                   & 76.73$\pm$1.45                   & 59.32$\pm$1.90                   \\
\textit{w/o low}         & 63.94$\pm$2.79                   & 59.64$\pm$2.75                   & 90.08$\pm$0.79                   & 80.31$\pm$1.14                   & 85.05$\pm$1.83                   & 77.23$\pm$1.49                   & 61.98$\pm$1.92                   \\
\textit{w/o cal}                & 65.66$\pm$2.80                   &  58.61$\pm$2.58                   & 89.58$\pm$1.02                   & 85.97$\pm$1.13                   & 83.52$\pm$1.89                   & 77.18$\pm$1.45                   & 61.84$\pm$1.96                   \\
\textit{w/o both}        & 60.12$\pm$2.12                   & 55.36$\pm$2.20                   & 65.56$\pm$2.60                   & 75.88$\pm$2.75                   & 79.90$\pm$2.62                   & 53.24$\pm$2.33                   & 42.81$\pm$2.34                   \\
Ours                   & \textbf{66.48$\pm$2.88} & \textbf{63.90$\pm$2.84} & \textbf{90.23$\pm$0.90} & \textbf{86.03$\pm$1.05} & \textbf{85.30$\pm$1.90} & \textbf{78.22$\pm$1.38} & \textbf{62.31$\pm$1.94} \\ 
\bottomrule
\end{tabular}
}
\end{table*}

\noindent \textbf{Hyperparameter Sensitivity.} 
We primarily analyze the impact of two crucial hyperparameters---the bandwidth $\sigma$ and the gating temperature $\tau$---on model performance under the 2 way 5 shot experimental setting across several datasets, as shown in Figs. \ref{bandwidth-sensitivity} and \ref{gating-sensitivity}. We observe that the model performance with respect to the bandwidth parameter $\sigma$ generally exhibits a rise-then-fall trend. When $\sigma$ is too small, the model only leverages a limited number of neighboring samples. Conversely, an excessively large $\sigma$ introduces many irrelevant or noisy nodes, thereby impairing the model's discriminative capability. Regarding the gating temperature $\tau$, the model performance remains relatively stable across different values. A plausible explanation is that within the explored range, whether the softmax distribution becomes slightly sharper or smoother, both the low-pass and high-pass experts maintain sufficient representational capacity.

\begin{figure}[htbp]
  \centering
  \subfigure[]{
    \includegraphics[width=0.45\textwidth]{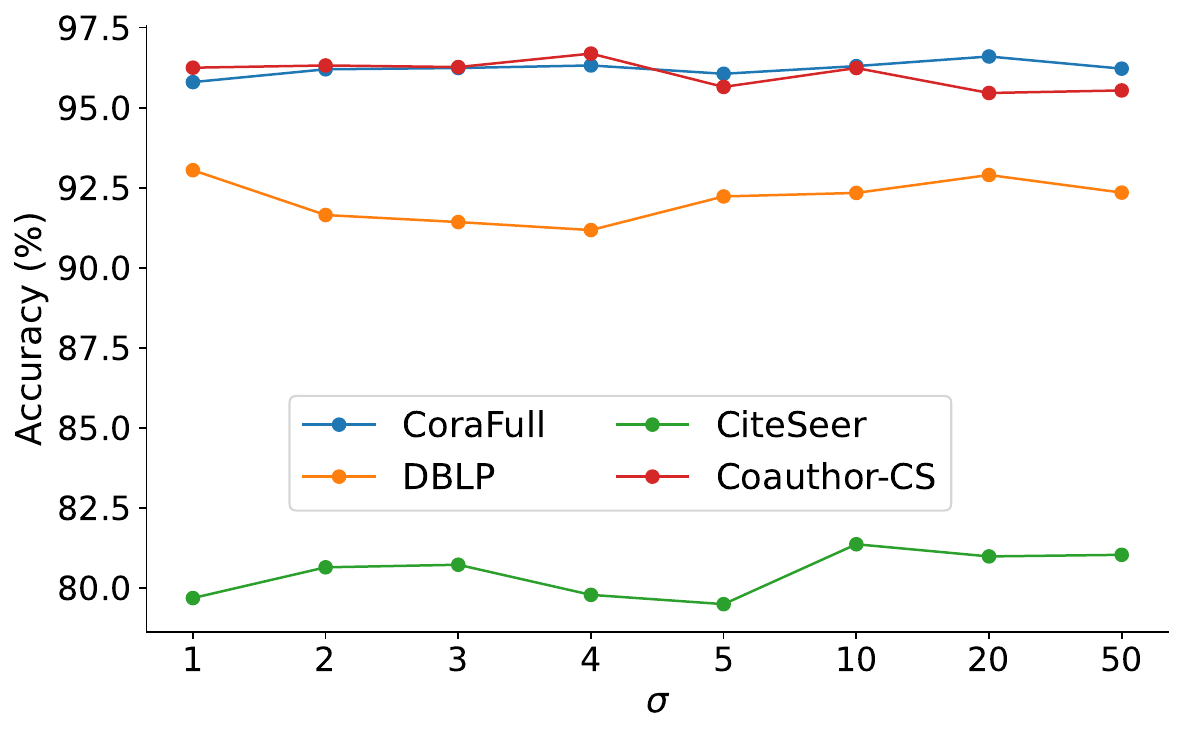}
    \label{bandwidth-sensitivity}
  }
  \hfill
  \subfigure[]{
    \includegraphics[width=0.45\textwidth]{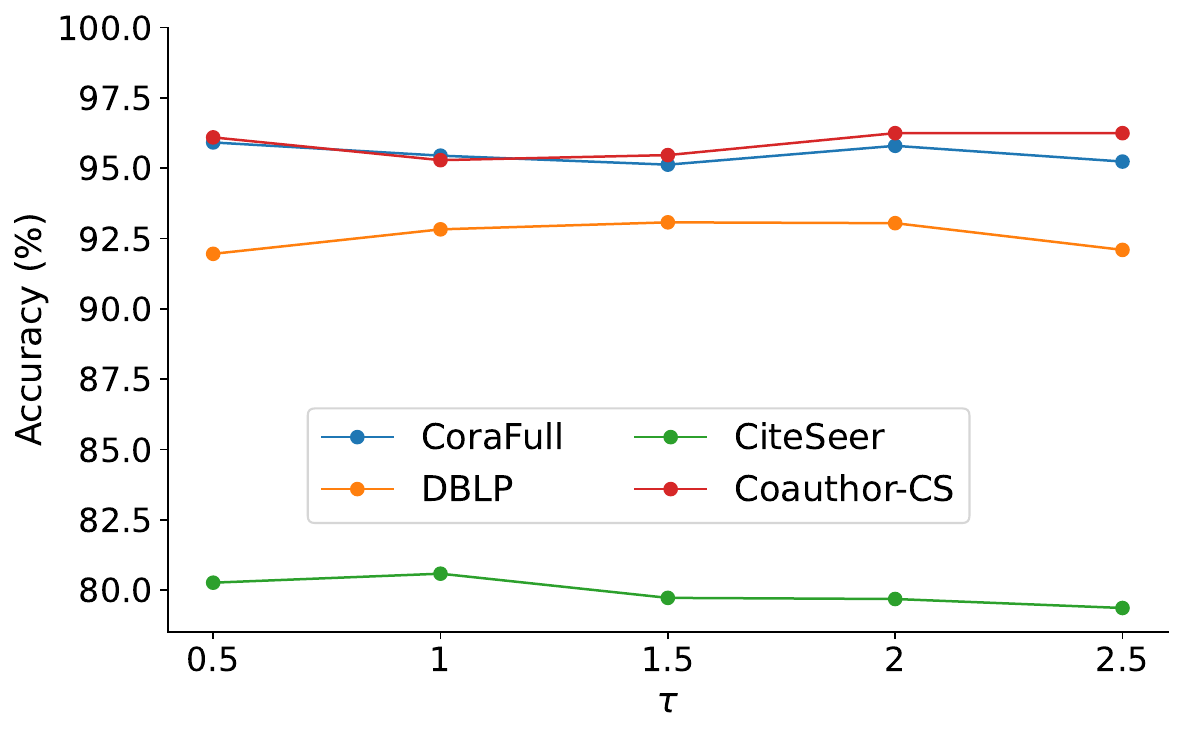}
    \label{gating-sensitivity}
  }
  \caption{Hyperparameter sensitivity analysis: (a) Model performance varies with the bandwidth $\sigma$. (b) Model performance varies with the gating temperature $\tau$.}
  \label{fig:hyperparam_sensitivity}
\end{figure}

\noindent \textbf{Case Study.} 
We visualize the weights learned by the gating module to verify whether the proposed adaptive spectral expert strategy can effectively assign different weights to the experts based on varying local connectivity patterns. Specifically, on the Coauthor-CS dataset, 
nodes are grouped into 20 equal-width bins according to their homophily scores $d_v^{\text{hom}}$. For each bin, we compute and plot the normalized mean weights assigned to the low-pass expert \(\alpha_{\mathrm{low}}\) and the high-pass expert \(\alpha_{\mathrm{high}}\) (Figs. \ref{HomoLow} and \ref{HomoHigh}). The results show a clear trend: as node homophily increases, the weight allocated to the low-pass expert steadily increases, while that of the high-pass expert decreases accordingly. This behavior is consistent with our intuition and provides empirical evidence that the proposed adaptive spectral expert strategy effectively captures the local structural patterns of nodes.

Moreover, we visualizes the cross-set distribution calibration strategy under the 2-way 1-shot setting in Fig. \ref{KDE-visual}. We observe that the uncalibrated prototypes coincide with the corresponding support samples, which often lie in sparse regions relative to the query samples, potentially leading to misclassification. In contrast, the calibrated prototypes are shifted toward the dense regions of their corresponding query points, bringing them closer to the query cluster centers. 

\begin{figure}[htbp]
  \centering

  \subfigure[]{
    \includegraphics[width=0.3\textwidth]{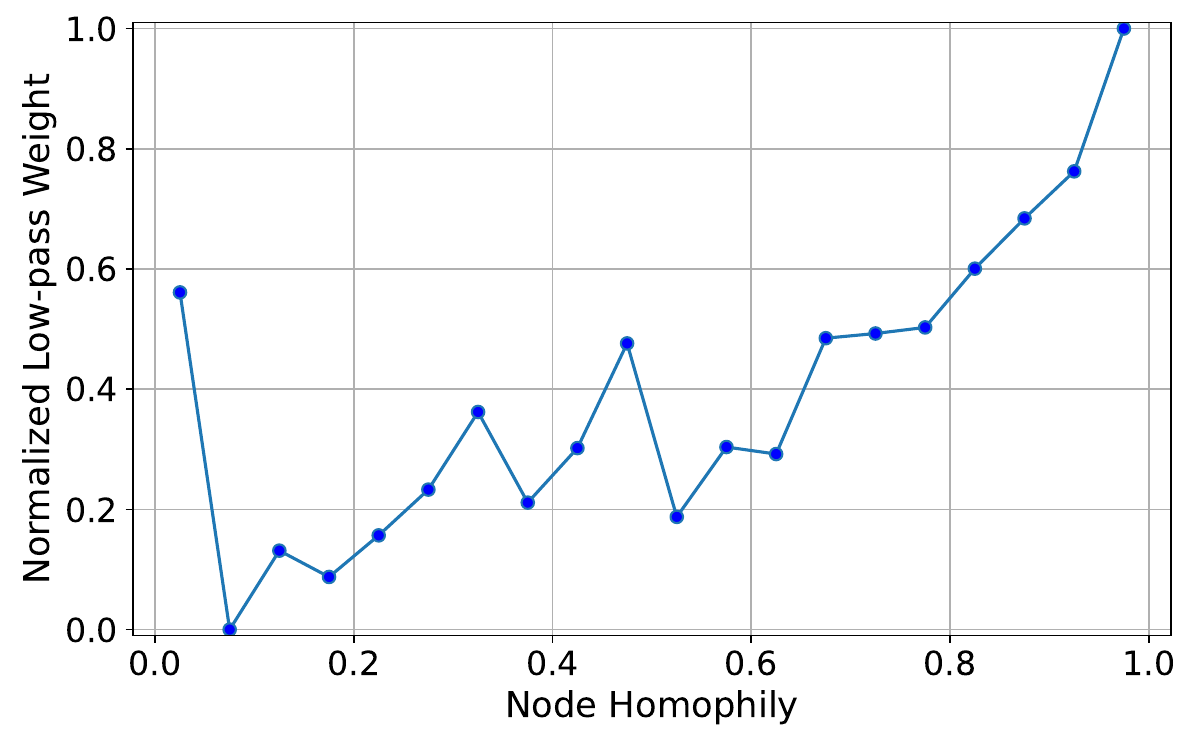}
    \label{HomoLow}
  }
  \subfigure[]{
    \includegraphics[width=0.3\textwidth]{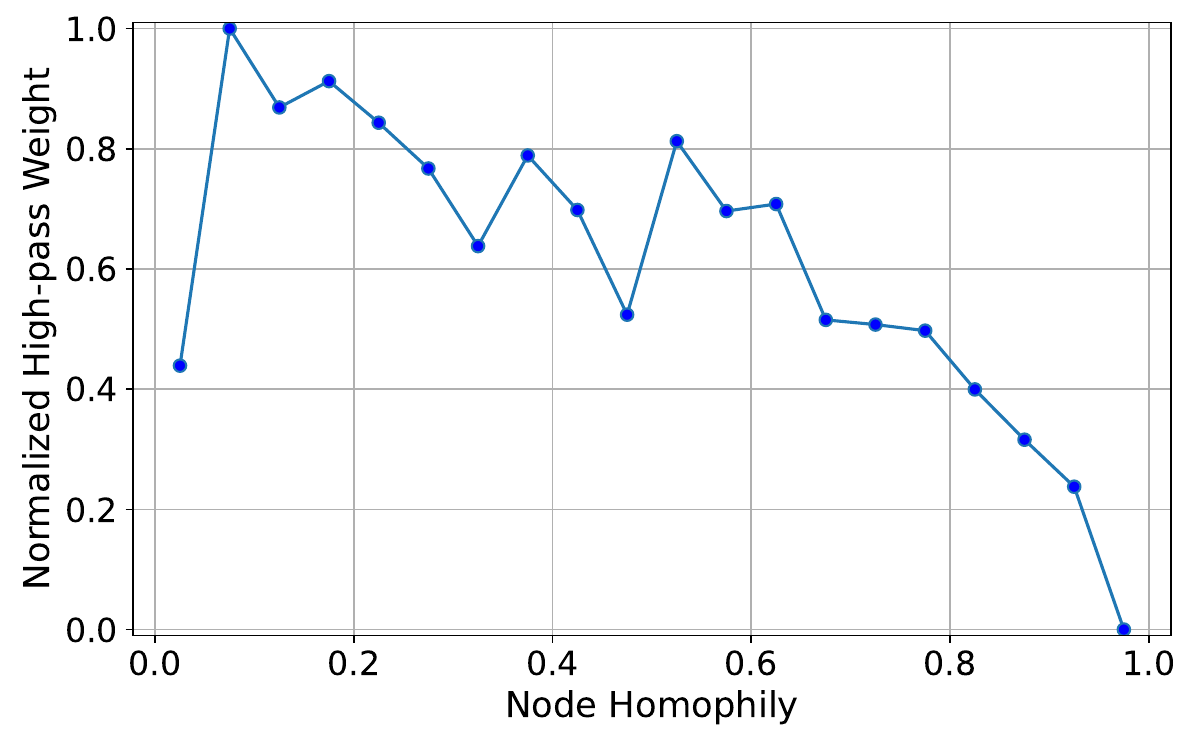}
    \label{HomoHigh}
  }
  \subfigure[]{
    \includegraphics[width=0.3\textwidth]{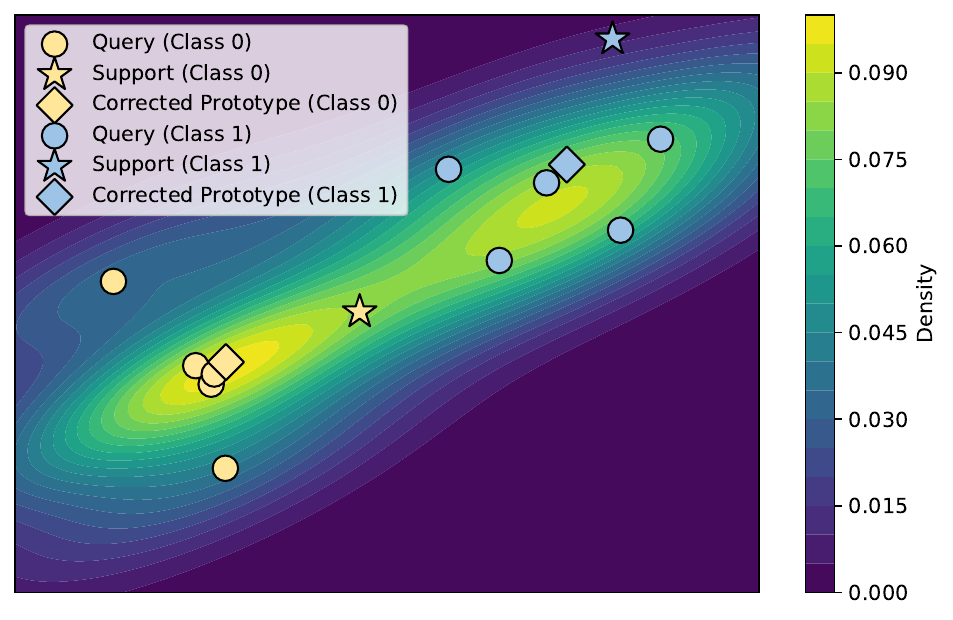}
    \label{KDE-visual}
  }
  \caption{(a) Normalized low-pass expert weight across node homophily $d_v^{\text{hom}}$. (b) Normalized high-pass expert weight across node homophily $d_v^{\text{hom}}$. (c) Cross-set distribution calibration via KDE contours, showing support (stars), query (circles), and corrected prototypes (diamonds).} 
  \label{fig:case_study_all}
\end{figure}

\section{Conclusion}
In this work, we propose a novel model, named GRACE for graph FSL. Specifically, our model incorporates two key techniques. First, an adaptive spectral expert strategy is employed to assign different weights to multiple experts based on the diverse local connectivity patterns of nodes, thereby learning expressive node embeddings. Second, a cross-set distribution calibration strategy is introduced to mitigate the distributional shift between the support and query sets, enabling the model to establish more accurate decision boundaries. Theoretically, GRACE offers stronger generalization guarantees by adapting to local structural heterogeneity and mitigating distributional shifts. Empirically, GRACE consistently outperforms other competitive models across multiple benchmark datasets. 

\section*{Acknowledgments}
Our work is supported by the National Science and Technology Major Project under Grant No. 2021ZD0112500, the National Natural Science Foundation of China (No. 62172187 and No. 62372209). Fausto Giunchiglia's work is funded by European Union's Horizon 2020 FET Proactive Project (No.823783).

\bibliography{reference}
\bibliographystyle{unsrtnat}

\newpage
\section*{NeurIPS Paper Checklist}




\begin{enumerate}

\item {\bf Claims}
    \item[] Question: Do the main claims made in the abstract and introduction accurately reflect the paper's contributions and scope?
    \item[] Answer: \answerYes{} 
    \item[] Justification: The abstract and introduction clearly outline the main contributions and scope of our work.  
    \item[] Guidelines:
    \begin{itemize}
        \item The answer NA means that the abstract and introduction do not include the claims made in the paper.
        \item The abstract and/or introduction should clearly state the claims made, including the contributions made in the paper and important assumptions and limitations. A No or NA answer to this question will not be perceived well by the reviewers. 
        \item The claims made should match theoretical and experimental results, and reflect how much the results can be expected to generalize to other settings. 
        \item It is fine to include aspirational goals as motivation as long as it is clear that these goals are not attained by the paper. 
    \end{itemize}

\item {\bf Limitations}
    \item[] Question: Does the paper discuss the limitations of the work performed by the authors?
    \item[] Answer: \answerYes{} 
    \item[] Justification: In Section \ref{limitation}, we discuss the limitations of our model.
    \item[] Guidelines:
    \begin{itemize}
        \item The answer NA means that the paper has no limitation while the answer No means that the paper has limitations, but those are not discussed in the paper. 
        \item The authors are encouraged to create a separate "Limitations" section in their paper.
        \item The paper should point out any strong assumptions and how robust the results are to violations of these assumptions (e.g., independence assumptions, noiseless settings, model well-specification, asymptotic approximations only holding locally). The authors should reflect on how these assumptions might be violated in practice and what the implications would be.
        \item The authors should reflect on the scope of the claims made, e.g., if the approach was only tested on a few datasets or with a few runs. In general, empirical results often depend on implicit assumptions, which should be articulated.
        \item The authors should reflect on the factors that influence the performance of the approach. For example, a facial recognition algorithm may perform poorly when image resolution is low or images are taken in low lighting. Or a speech-to-text system might not be used reliably to provide closed captions for online lectures because it fails to handle technical jargon.
        \item The authors should discuss the computational efficiency of the proposed algorithms and how they scale with dataset size.
        \item If applicable, the authors should discuss possible limitations of their approach to address problems of privacy and fairness.
        \item While the authors might fear that complete honesty about limitations might be used by reviewers as grounds for rejection, a worse outcome might be that reviewers discover limitations that aren't acknowledged in the paper. The authors should use their best judgment and recognize that individual actions in favor of transparency play an important role in developing norms that preserve the integrity of the community. Reviewers will be specifically instructed to not penalize honesty concerning limitations.
    \end{itemize}

\item {\bf Theory assumptions and proofs}
    \item[] Question: For each theoretical result, does the paper provide the full set of assumptions and a complete (and correct) proof?
    \item[] Answer: \answerYes{} 
    \item[] Justification:  All theoretical results are accompanied by complete proofs, which are detailed in Section \ref{Theoretical Analysis} and Appendix \ref{proofs}.
    \item[] Guidelines:
    \begin{itemize}
        \item The answer NA means that the paper does not include theoretical results. 
        \item All the theorems, formulas, and proofs in the paper should be numbered and cross-referenced.
        \item All assumptions should be clearly stated or referenced in the statement of any theorems.
        \item The proofs can either appear in the main paper or the supplemental material, but if they appear in the supplemental material, the authors are encouraged to provide a short proof sketch to provide intuition. 
        \item Inversely, any informal proof provided in the core of the paper should be complemented by formal proofs provided in appendix or supplemental material.
        \item Theorems and Lemmas that the proof relies upon should be properly referenced. 
    \end{itemize}

    \item {\bf Experimental result reproducibility}
    \item[] Question: Does the paper fully disclose all the information needed to reproduce the main experimental results of the paper to the extent that it affects the main claims and/or conclusions of the paper (regardless of whether the code and data are provided or not)?
    \item[] Answer: \answerYes{} 
    \item[] Justification: The two core components of the model are thoroughly introduced in Section \ref{method}, while the complete implementation details are provided in Section \ref{Implementation Details}.
    \item[] Guidelines:
    \begin{itemize}
        \item The answer NA means that the paper does not include experiments.
        \item If the paper includes experiments, a No answer to this question will not be perceived well by the reviewers: Making the paper reproducible is important, regardless of whether the code and data are provided or not.
        \item If the contribution is a dataset and/or model, the authors should describe the steps taken to make their results reproducible or verifiable. 
        \item Depending on the contribution, reproducibility can be accomplished in various ways. For example, if the contribution is a novel architecture, describing the architecture fully might suffice, or if the contribution is a specific model and empirical evaluation, it may be necessary to either make it possible for others to replicate the model with the same dataset, or provide access to the model. In general. releasing code and data is often one good way to accomplish this, but reproducibility can also be provided via detailed instructions for how to replicate the results, access to a hosted model (e.g., in the case of a large language model), releasing of a model checkpoint, or other means that are appropriate to the research performed.
        \item While NeurIPS does not require releasing code, the conference does require all submissions to provide some reasonable avenue for reproducibility, which may depend on the nature of the contribution. For example
        \begin{enumerate}
            \item If the contribution is primarily a new algorithm, the paper should make it clear how to reproduce that algorithm.
            \item If the contribution is primarily a new model architecture, the paper should describe the architecture clearly and fully.
            \item If the contribution is a new model (e.g., a large language model), then there should either be a way to access this model for reproducing the results or a way to reproduce the model (e.g., with an open-source dataset or instructions for how to construct the dataset).
            \item We recognize that reproducibility may be tricky in some cases, in which case authors are welcome to describe the particular way they provide for reproducibility. In the case of closed-source models, it may be that access to the model is limited in some way (e.g., to registered users), but it should be possible for other researchers to have some path to reproducing or verifying the results.
        \end{enumerate}
    \end{itemize}

\item {\bf Open access to data and code}
    \item[] Question: Does the paper provide open access to the data and code, with sufficient instructions to faithfully reproduce the main experimental results, as described in supplemental material?
    \item[] Answer: \answerYes{} 
    \item[] Justification: Our data and code are available at (\url{https://anonymous.4open.science/r/GRACE-7E41}) 
    \item[] Guidelines:
    \begin{itemize}
        \item The answer NA means that paper does not include experiments requiring code.
        \item Please see the NeurIPS code and data submission guidelines (\url{https://nips.cc/public/guides/CodeSubmissionPolicy}) for more details.
        \item While we encourage the release of code and data, we understand that this might not be possible, so “No” is an acceptable answer. Papers cannot be rejected simply for not including code, unless this is central to the contribution (e.g., for a new open-source benchmark).
        \item The instructions should contain the exact command and environment needed to run to reproduce the results. See the NeurIPS code and data submission guidelines (\url{https://nips.cc/public/guides/CodeSubmissionPolicy}) for more details.
        \item The authors should provide instructions on data access and preparation, including how to access the raw data, preprocessed data, intermediate data, and generated data, etc.
        \item The authors should provide scripts to reproduce all experimental results for the new proposed method and baselines. If only a subset of experiments are reproducible, they should state which ones are omitted from the script and why.
        \item At submission time, to preserve anonymity, the authors should release anonymized versions (if applicable).
        \item Providing as much information as possible in supplemental material (appended to the paper) is recommended, but including URLs to data and code is permitted.
    \end{itemize}

\item {\bf Experimental setting/details}
    \item[] Question: Does the paper specify all the training and test details (e.g., data splits, hyperparameters, how they were chosen, type of optimizer, etc.) necessary to understand the results?
    \item[] Answer: \answerYes{} 
    \item[] Justification: We provide detailed descriptions of all training and test settings, including hyperparameters and optimization strategies. Additionally, implementation details are included in Section \ref{Implementation Details}. 
    \item[] Guidelines:
    \begin{itemize}
        \item The answer NA means that the paper does not include experiments.
        \item The experimental setting should be presented in the core of the paper to a level of detail that is necessary to appreciate the results and make sense of them.
        \item The full details can be provided either with the code, in appendix, or as supplemental material.
    \end{itemize}

\item {\bf Experiment statistical significance}
    \item[] Question: Does the paper report error bars suitably and correctly defined or other appropriate information about the statistical significance of the experiments?
    \item[] Answer: \answerYes{} 
    \item[] Justification: To ensure the fairness and stability of our results, we conduct 5 independent experiments and report the average accuracy, standard deviation, and 95\% confidence interval across these runs.
    \item[] Guidelines:
    \begin{itemize}
        \item The answer NA means that the paper does not include experiments.
        \item The authors should answer "Yes" if the results are accompanied by error bars, confidence intervals, or statistical significance tests, at least for the experiments that support the main claims of the paper.
        \item The factors of variability that the error bars are capturing should be clearly stated (for example, train/test split, initialization, random drawing of some parameter, or overall run with given experimental conditions).
        \item The method for calculating the error bars should be explained (closed form formula, call to a library function, bootstrap, etc.)
        \item The assumptions made should be given (e.g., Normally distributed errors).
        \item It should be clear whether the error bar is the standard deviation or the standard error of the mean.
        \item It is OK to report 1-sigma error bars, but one should state it. The authors should preferably report a 2-sigma error bar than state that they have a 96\% CI, if the hypothesis of Normality of errors is not verified.
        \item For asymmetric distributions, the authors should be careful not to show in tables or figures symmetric error bars that would yield results that are out of range (e.g. negative error rates).
        \item If error bars are reported in tables or plots, The authors should explain in the text how they were calculated and reference the corresponding figures or tables in the text.
    \end{itemize}

\item {\bf Experiments compute resources}
    \item[] Question: For each experiment, does the paper provide sufficient information on the computer resources (type of compute workers, memory, time of execution) needed to reproduce the experiments?
    \item[] Answer: \answerYes{} 
    \item[] Justification: We have provided the information on our GPUs used for training in Section \ref{Implementation Details}.
    \item[] Guidelines:
    \begin{itemize}
        \item The answer NA means that the paper does not include experiments.
        \item The paper should indicate the type of compute workers CPU or GPU, internal cluster, or cloud provider, including relevant memory and storage.
        \item The paper should provide the amount of compute required for each of the individual experimental runs as well as estimate the total compute. 
        \item The paper should disclose whether the full research project required more compute than the experiments reported in the paper (e.g., preliminary or failed experiments that didn't make it into the paper). 
    \end{itemize}
    
\item {\bf Code of ethics}
    \item[] Question: Does the research conducted in the paper conform, in every respect, with the NeurIPS Code of Ethics \url{https://neurips.cc/public/EthicsGuidelines}?
    \item[] Answer: \answerYes{} 
    \item[] Justification: Our research fully complies with all requirements of the NeurIPS Code of Ethics. 
    \item[] Guidelines:
    \begin{itemize}
        \item The answer NA means that the authors have not reviewed the NeurIPS Code of Ethics.
        \item If the authors answer No, they should explain the special circumstances that require a deviation from the Code of Ethics.
        \item The authors should make sure to preserve anonymity (e.g., if there is a special consideration due to laws or regulations in their jurisdiction).
    \end{itemize}

\item {\bf Broader impacts}
    \item[] Question: Does the paper discuss both potential positive societal impacts and negative societal impacts of the work performed?
    \item[] Answer: \answerYes{} 
    \item[] Justification: In Section \ref{broader}, we discuss the broader impacts of our model. 
    \item[] Guidelines:
    \begin{itemize}
        \item The answer NA means that there is no societal impact of the work performed.
        \item If the authors answer NA or No, they should explain why their work has no societal impact or why the paper does not address societal impact.
        \item Examples of negative societal impacts include potential malicious or unintended uses (e.g., disinformation, generating fake profiles, surveillance), fairness considerations (e.g., deployment of technologies that could make decisions that unfairly impact specific groups), privacy considerations, and security considerations.
        \item The conference expects that many papers will be foundational research and not tied to particular applications, let alone deployments. However, if there is a direct path to any negative applications, the authors should point it out. For example, it is legitimate to point out that an improvement in the quality of generative models could be used to generate deepfakes for disinformation. On the other hand, it is not needed to point out that a generic algorithm for optimizing neural networks could enable people to train models that generate Deepfakes faster.
        \item The authors should consider possible harms that could arise when the technology is being used as intended and functioning correctly, harms that could arise when the technology is being used as intended but gives incorrect results, and harms following from (intentional or unintentional) misuse of the technology.
        \item If there are negative societal impacts, the authors could also discuss possible mitigation strategies (e.g., gated release of models, providing defenses in addition to attacks, mechanisms for monitoring misuse, mechanisms to monitor how a system learns from feedback over time, improving the efficiency and accessibility of ML).
    \end{itemize}
    
\item {\bf Safeguards}
    \item[] Question: Does the paper describe safeguards that have been put in place for responsible release of data or models that have a high risk for misuse (e.g., pretrained language models, image generators, or scraped datasets)?
    \item[] Answer: \answerNA{} 
    \item[] Justification: Our work poses no such risks.
    \item[] Guidelines:
    \begin{itemize}
        \item The answer NA means that the paper poses no such risks.
        \item Released models that have a high risk for misuse or dual-use should be released with necessary safeguards to allow for controlled use of the model, for example by requiring that users adhere to usage guidelines or restrictions to access the model or implementing safety filters. 
        \item Datasets that have been scraped from the Internet could pose safety risks. The authors should describe how they avoided releasing unsafe images.
        \item We recognize that providing effective safeguards is challenging, and many papers do not require this, but we encourage authors to take this into account and make a best faith effort.
    \end{itemize}

\item {\bf Licenses for existing assets}
    \item[] Question: Are the creators or original owners of assets (e.g., code, data, models), used in the paper, properly credited and are the license and terms of use explicitly mentioned and properly respected?
    \item[] Answer: \answerYes{} 
    \item[] Justification: We have cited all the sources we used to conduct the experiments.
    \item[] Guidelines:
    \begin{itemize}
        \item The answer NA means that the paper does not use existing assets.
        \item The authors should cite the original paper that produced the code package or dataset.
        \item The authors should state which version of the asset is used and, if possible, include a URL.
        \item The name of the license (e.g., CC-BY 4.0) should be included for each asset.
        \item For scraped data from a particular source (e.g., website), the copyright and terms of service of that source should be provided.
        \item If assets are released, the license, copyright information, and terms of use in the package should be provided. For popular datasets, \url{paperswithcode.com/datasets} has curated licenses for some datasets. Their licensing guide can help determine the license of a dataset.
        \item For existing datasets that are re-packaged, both the original license and the license of the derived asset (if it has changed) should be provided.
        \item If this information is not available online, the authors are encouraged to reach out to the asset's creators.
    \end{itemize}

\item {\bf New assets}
    \item[] Question: Are new assets introduced in the paper well documented and is the documentation provided alongside the assets?
    \item[] Answer: \answerYes{} 
    \item[] Justification: In the abstract, we provide an anonymous GitHub link to offer open access to our code and data. 
    \item[] Guidelines:
    \begin{itemize}
        \item The answer NA means that the paper does not release new assets.
        \item Researchers should communicate the details of the dataset/code/model as part of their submissions via structured templates. This includes details about training, license, limitations, etc. 
        \item The paper should discuss whether and how consent was obtained from people whose asset is used.
        \item At submission time, remember to anonymize your assets (if applicable). You can either create an anonymized URL or include an anonymized zip file.
    \end{itemize}

\item {\bf Crowdsourcing and research with human subjects}
    \item[] Question: For crowdsourcing experiments and research with human subjects, does the paper include the full text of instructions given to participants and screenshots, if applicable, as well as details about compensation (if any)? 
    \item[] Answer: \answerNA{} 
    \item[] Justification: Our work does not involve crowdsourcing nor research with human subjects.
    \item[] Guidelines:
    \begin{itemize}
        \item The answer NA means that the paper does not involve crowdsourcing nor research with human subjects.
        \item Including this information in the supplemental material is fine, but if the main contribution of the paper involves human subjects, then as much detail as possible should be included in the main paper. 
        \item According to the NeurIPS Code of Ethics, workers involved in data collection, curation, or other labor should be paid at least the minimum wage in the country of the data collector. 
    \end{itemize}

\item {\bf Institutional review board (IRB) approvals or equivalent for research with human subjects}
    \item[] Question: Does the paper describe potential risks incurred by study participants, whether such risks were disclosed to the subjects, and whether Institutional Review Board (IRB) approvals (or an equivalent approval/review based on the requirements of your country or institution) were obtained?
    \item[] Answer: \answerNA{} 
    \item[] Justification: Our work does not involve crowdsourcing nor research with human subjects.
    \item[] Guidelines:
    \begin{itemize}
        \item The answer NA means that the paper does not involve crowdsourcing nor research with human subjects.
        \item Depending on the country in which research is conducted, IRB approval (or equivalent) may be required for any human subjects research. If you obtained IRB approval, you should clearly state this in the paper. 
        \item We recognize that the procedures for this may vary significantly between institutions and locations, and we expect authors to adhere to the NeurIPS Code of Ethics and the guidelines for their institution. 
        \item For initial submissions, do not include any information that would break anonymity (if applicable), such as the institution conducting the review.
    \end{itemize}

\item {\bf Declaration of LLM usage}
    \item[] Question: Does the paper describe the usage of LLMs if it is an important, original, or non-standard component of the core methods in this research? Note that if the LLM is used only for writing, editing, or formatting purposes and does not impact the core methodology, scientific rigorousness, or originality of the research, declaration is not required.
    \item[] Answer: \answerNA{} 
    \item[] Justification: In our work, the LLM is used only for writing and editing.
    \item[] Guidelines:
    \begin{itemize}
        \item The answer NA means that the core method development in this research does not involve LLMs as any important, original, or non-standard components.
        \item Please refer to our LLM policy (\url{https://neurips.cc/Conferences/2025/LLM}) for what should or should not be described.
    \end{itemize}

\end{enumerate}

\newpage
\appendix
\section{Appendix}

\subsection{Training Procedure}
\label{training}

\begin{algorithm}[ht]
\caption{Training Procedure of GRACE}
    \begin{algorithmic}[1]
    \REQUIRE A graph $\mathcal{H}=\{\mathcal{V}, \mathcal{E}, \mathbf{X}, \mathbf{A}\}$.
    \ENSURE The trained GRACE.
    \STATE Perform the low-pass expert using Eq.\ref{low-pass}.
    \STATE Perform the high-pass expert using Eqs.\ref{high-pass-one} and \ref{high-pass-two}.
    \STATE Intergrate the outputs of low-pass and high-pass experts using Eq.\ref{gating}.
    \STATE Perform cross-set distribution calibration using Eqs.\ref{cross_one} and \ref{cross_two}.
    \STATE Optimize the proposed model by minimizing the loss in Eq.\ref{loss}.
    \STATE Evaluate the model performance using query set with Eq.\ref{prediction}.
    \end{algorithmic}
\label{pseudo}
\end{algorithm}

We present the detailed training procedure of GRACE in Algorithm \ref{pseudo}.

\subsection{Complexity Analysis}
\label{complexity}
In this section, we provide a detailed analysis of the time complexity of the proposed model. During the graph feature extraction phase, the primary computational cost arises from the low-pass branch and the high-pass expert’s forward propagation. The low-pass branch employs two graph convolutional layers implemented using sparse matrix multiplications, resulting in an approximate complexity of \( \mathcal{O}(2|E|\cdot d^\prime) \), where \(|E|\) denotes the number of edges and \( d^\prime \) represents the dimensionality of the hidden units. In the high-pass expert module, aside from an initial linear projection, a sparse self-attention computation is performed on all edges, incurring a complexity of \( \mathcal{O}(|E|\cdot d^\prime) \). Furthermore, the gating network integrates the raw node features with their first- and second-order neighborhood differences via fully-connected mappings, and its forward pass operates with a complexity of \( \mathcal{O}(n\cdot d) \), where \( n \) is the total number of nodes and \( d \) is the input feature dimension. Regarding the few-shot prototype correction mechanism, the complexity is primarily determined by the Euclidean distance computations and subsequent weighted corrections between the support and query samples, which is approximately \( \mathcal{O}(N_Q \cdot d^\prime) \), with \( N_Q \) denoting the number of query samples. Additionally, the supervised contrastive loss employed in the model necessitates concatenating, normalizing, and computing the similarity matrix for the features of both support and query samples, leading to an overall complexity of \( \mathcal{O}((N_S+N_Q)^2 \cdot d^\prime) \), where \( N_S \) represents the number of samples in the support set. In summary, the overall time complexity of the model is chiefly influenced by the number of edges, nodes, and the feature dimensions; by employing sparse matrix operations and localized sampling strategies, we have effectively reduced the computational burden, ensuring that the model’s runtime efficiency remains within acceptable bounds for practical applications.

\subsection{Theoretical Proofs}
\label{proofs}

\subsubsection{Proof of Theorem \ref{thm:main}}
Before formally proving Theorem \ref{thm:main}, we first present several key definitions.

\begin{enumerate}
    \item \textbf{(Lipschitz Continuity)} The loss function $\mathcal{L}: \mathbb{R}^N \times \mathcal{Y} \to \mathbb{R}^+$ 
          is $L$-Lipschitz continuous if the following is satisified: 
          \[
          |\mathcal{L}(\mathbf{Z}, y) - \mathcal{L}(\mathbf{Z}', y)| \leq L\|\mathbf{Z} - \mathbf{Z}'\|_2, \quad \forall y \in \mathcal{Y}.
          \]
    
    \item \textbf{(Heterogenity Balance)} For node $v \in \mathcal{V}$, define homophily degree 
          $d_v^{\mathrm{hom}} = |\{(u,v) \in \mathcal{E} : y_u = y_v\}|$ and heterophily degree 
          $d_v^{\mathrm{het}} = |\mathcal{E}_v| - d_v^{\mathrm{hom}}$. For $\epsilon_g>0$, the gating network satisfies:
          \[
          \mathbb{E}_{v \sim P_{\mathcal{V}}}\left[|\alpha_v - \mathbb{I}(d_v^{\mathrm{hom}} > d_v^{\mathrm{het}})|\right] \leq \epsilon_g.
          \]
    
    \item \textbf{(Distribution Shift)} For $\delta>0$, the Wasserstein-1 distance between support set distribution $P_{\mathcal{S}}$ 
          and query set distribution $P_{\mathcal{Q}}$ satisfies:
          \[
          W_1(P_{\mathcal{S}}, P_{\mathcal{Q}}) \leq \delta.
          \]
\end{enumerate}





The complete error decomposition is:
\begin{equation}
\label{decomposition}
    \epsilon_{\textit{gen}} = \underbrace{\epsilon_{\textit{MoE}}}_{\text{Expert Variance}} + \underbrace{\epsilon_{\textit{Gate}}}_{\text{Gating Error}} + \underbrace{\epsilon_{\textit{Dist}}}_{\text{Distribution Shift}}.
\end{equation}

Next, we present the following lemmas to assist the proof of Theorem \ref{thm:main}.

\begin{lemma}[Expert Variance Bound]
\label{lemma:moebound}
Let $\mathcal{F}_{\text{MoE}}$ be the hypothesis class of the adaptive spetrum experts module. The expected error from expert diversity satisfies:
\[
\epsilon_{\textit{MoE}} \leq 2L \mathfrak{R}_N(\mathcal{F}_{\textit{MoE}}) \leq C_1\sqrt{\frac{\log T}{T}},
\]
where $\mathfrak{R}_N$ is the Rademacher complexity.
\end{lemma}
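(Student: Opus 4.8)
The plan is to establish the two inequalities in the claim separately: first I would convert the expert-variance error $\epsilon_{\textit{MoE}}$ into a Rademacher complexity of $\mathcal{F}_{\textit{MoE}}$ via standard symmetrization and contraction, and then I would control that complexity for the specific two-expert gated architecture. For the middle inequality, I would begin with the standard symmetrization bound, which gives that the expected gap between population and empirical risk of the composite class $\mathcal{L} \circ \mathcal{F}_{\textit{MoE}}$ is at most $2\mathfrak{R}_N(\mathcal{L} \circ \mathcal{F}_{\textit{MoE}})$. Since $\mathcal{L}$ is $L$-Lipschitz by assumption, I would then invoke the Ledoux--Talagrand contraction inequality to peel off the loss, obtaining $\mathfrak{R}_N(\mathcal{L} \circ \mathcal{F}_{\textit{MoE}}) \leq L\,\mathfrak{R}_N(\mathcal{F}_{\textit{MoE}})$, and hence $\epsilon_{\textit{MoE}} \leq 2L\,\mathfrak{R}_N(\mathcal{F}_{\textit{MoE}})$.

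For the final inequality I would decompose the complexity of the gated combination. Each element of $\mathcal{F}_{\textit{MoE}}$ has the form $\mathbf{Z} = \alpha_{:,0}\mathbf{H}_{\text{low}} + \alpha_{:,1}\mathbf{H}_{\text{high}}$ with $\alpha_{:,0} + \alpha_{:,1} = 1$, so I would use the subadditivity and convex-hull properties of Rademacher complexity to bound $\mathfrak{R}_N(\mathcal{F}_{\textit{MoE}})$ by the complexities of the low-pass and high-pass expert classes plus a term that isolates the gating. Each expert class is a composition of a norm-bounded GCN (or differential-attention) map with a linear readout, so by the usual Dudley chaining bound for norm-constrained networks its Rademacher complexity is of order $1/\sqrt{N}$; in the episodic meta-learning regime the effective sample count at the task level is the number of training tasks, so I would identify this rate as $\mathcal{O}(1/\sqrt{T})$.

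The key step, and what I expect to be the main obstacle, is extracting the $\sqrt{\log T}$ factor from the gating mechanism. The intuition is that the softmax gate effectively selects among a structured family of expert mixtures across the $T$ tasks, and controlling the supremum of the associated sub-Gaussian Rademacher process over this family introduces a maximal-inequality factor of $\sqrt{\log T}$. Concretely, I would bound the gating contribution through a covering-number argument on the simplex-valued weights $\alpha$ (equivalently, a union bound over the per-task selections), so that the worst-case mixture costs at most $\sqrt{\log T}$ beyond the base per-expert rate. Combining the $\mathcal{O}(1/\sqrt{T})$ expert rate with this gating factor yields $\mathfrak{R}_N(\mathcal{F}_{\textit{MoE}}) \leq C_1\sqrt{\log T / T}$, with $C_1$ absorbing the weight-norm bounds, the (constant) number of experts, and the input-radius constants. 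The delicate part is justifying that the gating process is genuinely sub-Gaussian with the correct variance proxy and that the covering number of the gated class grows only polynomially in $T$; aligning these constants so that exactly a logarithmic dependence survives is where the argument requires the most care.
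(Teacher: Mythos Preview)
Your argument for the first inequality (symmetrization followed by Ledoux--Talagrand contraction to peel off the $L$-Lipschitz loss) is exactly what the paper does, albeit the paper places this step last rather than first. For the second inequality, however, you take a genuinely different route. The paper does not decompose the gated combination or invoke Dudley chaining on the individual experts; instead it appeals directly to an off-the-shelf Rademacher bound for $\Pi$-expert ensembles (citing Kakade et al., 2012), which yields
\[
\mathfrak{R}_T(\mathcal{F}_{\textit{MoE}}) \leq \sqrt{\frac{\log T}{T}} \sum_{\pi=1}^{\Pi} \mathbb{E}\bigl[\|\mathbf{w}_\pi\|_2\bigr] \leq \Pi\sqrt{\frac{\log T}{T}},
\]
and then absorbs $2L\Pi$ into $C_1$. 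Your approach is more self-contained: you explain \emph{why} the $\sqrt{\log T}$ factor should appear (a maximal inequality over the gated selections across $T$ tasks) and where the $1/\sqrt{T}$ base rate comes from (norm-constrained network complexity at the task level). The paper's route is shorter because it outsources the entire ensemble analysis to a single citation; yours buys a clearer provenance for each term but requires you to verify the sub-Gaussianity and polynomial covering growth that you flag as the delicate part---conditions the paper simply sidesteps by quoting the existing ensemble result.
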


\begin{proof}
The proof follows three key steps:

\textbf{Step 1: Define Empirical Rademacher Complexity.}  
For $T$ \textit{i.i.d.} tasks $\{\mathcal{T}_i\}_{i=1}^T$ and Rademacher variables $\beta_i \in \{\pm1\}$:
\[
\mathfrak{R}_T(\mathcal{F}_{\textit{MoE}}) = \mathbb{E}\left[\sup_{f \in \mathcal{F}_{\textit{MoE}}} \frac{1}{T}\sum_{i=1}^N \beta_i f(\mathcal{T}_i)\right].
\]

\textbf{Step 2: Apply Rademacher Bound for Ensembles.}  
Using the theorem in~\cite{kakade2012regularization} for $\Pi$-expert models:
\begin{align*}
\mathfrak{R}_T(\mathcal{F}_{\textit{MoE}}) &\leq \sqrt{\frac{\log T}{T}} \sum_{\pi=1}^\Pi \mathbb{E}\left[\| \mathbf{w}_\pi \|_2 \right] \\
&\leq \sqrt{\frac{\log T}{T}} \cdot \Pi \quad 
\end{align*}

\textbf{Step 3: Link to Generalization Error.}  
By Talagrand's contraction lemma~\cite{talagrand2014upper}:
\begin{align*}
\mathbb{E}[\mathcal{E}_{\textit{MoE}}] &\leq 2L \mathfrak{R}_T(\mathcal{F}_{\textit{MoE}}) \\
&\leq 2L \sqrt{\frac{\log T}{T}} \\
&= C_1\sqrt{\frac{\log T}{T}}.
\end{align*}
\end{proof}

\begin{lemma}[Gating Error Propagation]
\label{lemma:gateerror}
Under Definition 2 (Heterogenity Balance), the gating-induced error satisfies:
\[
\epsilon_{\textit{Gate}} \leq L \sqrt{\mathbb{E}_v\left[(\alpha_v - \alpha_v^*)^2\right]} \leq C_2\epsilon_g.
\]
\end{lemma}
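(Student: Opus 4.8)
The plan is to bound the gating-induced error $\epsilon_{\textit{Gate}}$ by relating the discrepancy between the learned output $\mathbf{Z}$ and the ideal routed output $\mathbf{Z}^*$ (the one obtained if each node were assigned to the correct expert according to $\mathbb{I}(d_v^{\mathrm{hom}} > d_v^{\mathrm{het}})$) to the gating weight error $\alpha_v - \alpha_v^*$. Recall from Eq.~\eqref{gating} that the final embedding is a convex combination $\mathbf{Z}_v = \alpha_v \mathbf{H}_{\text{low},v} + (1-\alpha_v)\mathbf{H}_{\text{high},v}$, so perturbing the gate from $\alpha_v^*$ to $\alpha_v$ shifts the embedding by $(\alpha_v - \alpha_v^*)(\mathbf{H}_{\text{low},v} - \mathbf{H}_{\text{high},v})$. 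The first step is therefore to write $\|\mathbf{Z}_v - \mathbf{Z}^*_v\|_2 = |\alpha_v - \alpha_v^*|\cdot\|\mathbf{H}_{\text{low},v} - \mathbf{H}_{\text{high},v}\|_2$ and invoke a uniform bound on the expert output gap (the experts map into a bounded feature region after the LayerNorm in Eq.~\eqref{high-pass-one} and the batch normalization in the low-pass branch), so that $\|\mathbf{H}_{\text{low},v} - \mathbf{H}_{\text{high},v}\|_2 \leq B$ for some constant $B$.

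The second step is to push this pointwise embedding perturbation through the loss. Since $\mathcal{L}$ is $L$-Lipschitz (Definition 1), the error incurred by the gate at node $v$ satisfies $|\mathcal{L}(\mathbf{Z}_v, y) - \mathcal{L}(\mathbf{Z}^*_v, y)| \leq L\|\mathbf{Z}_v - \mathbf{Z}^*_v\|_2 \leq LB\,|\alpha_v - \alpha_v^*|$. Taking the expectation over $v \sim P_{\mathcal{V}}$ and applying Jensen's inequality (or Cauchy--Schwarz) to move from $\mathbb{E}_v[|\alpha_v - \alpha_v^*|]$ to the root-mean-square quantity $\sqrt{\mathbb{E}_v[(\alpha_v - \alpha_v^*)^2]}$ gives the first inequality in the statement, $\epsilon_{\textit{Gate}} \leq L\sqrt{\mathbb{E}_v[(\alpha_v - \alpha_v^*)^2]}$, after absorbing $B$ into the Lipschitz-type constant.

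The third step is to identify the ideal gate $\alpha_v^*$ with the hard indicator $\mathbb{I}(d_v^{\mathrm{hom}} > d_v^{\mathrm{het}})$, so that the root-mean-square error coincides with the $L_2$ deviation controlled by the Heterogeneity Balance assumption. Because $\alpha_v, \alpha_v^* \in [0,1]$, the squared deviation is bounded by the absolute deviation, $(\alpha_v - \alpha_v^*)^2 \leq |\alpha_v - \alpha_v^*|$, and hence $\mathbb{E}_v[(\alpha_v - \alpha_v^*)^2] \leq \mathbb{E}_v[|\alpha_v - \mathbb{I}(d_v^{\mathrm{hom}} > d_v^{\mathrm{het}})|] \leq \epsilon_g$ by Definition 2. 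Substituting yields $\epsilon_{\textit{Gate}} \leq L\sqrt{\epsilon_g}$, and by redefining the constant $C_2$ to absorb $L$ and the boundedness constant one obtains the claimed $C_2\epsilon_g$ form.

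The main obstacle I anticipate is the last bounding step, where the statement asserts a \emph{linear} dependence $C_2\epsilon_g$ whereas the natural Lipschitz argument produces a $\sqrt{\epsilon_g}$ rate. To recover the linear bound one must either assume the gating error is small so that $\epsilon_g \leq 1$ and reinterpret the constant, or sharpen the analysis — for instance, exploiting that when the gate is \emph{correctly} routed on all but an $\epsilon_g$-fraction of nodes the loss gap is only nonzero on that fraction, giving $\mathbb{E}_v[L B\,|\alpha_v - \alpha_v^*|] \leq LB\,\epsilon_g$ directly from Definition 2 without ever passing through the root-mean-square form. This direct route avoids the square root entirely and is the cleaner way to obtain $C_2\epsilon_g$; the subtlety is reconciling it with the intermediate $\sqrt{\mathbb{E}_v[(\alpha_v-\alpha_v^*)^2]}$ quantity that appears in the lemma statement, which I would handle by noting both bounds hold simultaneously and the tighter linear one governs the final constant.
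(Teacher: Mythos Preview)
Your proposal is correct and follows essentially the same route as the paper: write the embedding perturbation as $(\alpha_v-\alpha_v^*)(\mathbf{H}_{\mathrm{low},v}-\mathbf{H}_{\mathrm{high},v})$, bound its norm using normalized expert outputs, push through the $L$-Lipschitz loss, and take expectations. Your instinct in the final paragraph is exactly what the paper does: it never actually passes through the root-mean-square quantity but goes directly from $\mathbb{E}_v\|\Delta\mathbf{H}_v\|_2 \leq 2\,\mathbb{E}_v|\alpha_v-\alpha_v^*| \leq 2\epsilon_g$ to $\epsilon_{\mathrm{Gate}}\leq 2L\epsilon_g=C_2\epsilon_g$, so the intermediate $L\sqrt{\mathbb{E}_v[(\alpha_v-\alpha_v^*)^2]}$ in the lemma statement is not sharply tied to the final linear bound and your observed $\sqrt{\epsilon_g}$-vs-$\epsilon_g$ tension is real but simply sidestepped.
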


\begin{proof}
The proof consists of three key phases:

\noindent \textbf{Step 1: Error Vector Representation.}  
Define the representation discrepancy between ideal and actual gating:
\[
\Delta\mathbf{H}_v = (\alpha_v - \alpha_v^*)\mathbf{H}_{\mathrm{low},v} + (\alpha_v^\ast - \alpha_v)\mathbf{H}_{\mathrm{high},v}
\]
Using the Lipschitz continuity of the loss function:
\[
|\mathcal{L}(\mathbf{H}_v) - \mathcal{L}(\mathbf{H}_v^*)| \leq L\|\Delta\mathbf{H}_v\|_2
\]

\noindent \textbf{Step 2: Norm Analysis.}  
By the filter energy bound \(\|\mathbf{H}_{\mathrm{low},v}\|_2, \|\mathbf{H}_{\mathrm{high},v}\|_2 \leq 1\) (normalized representations):
\[
\|\Delta\mathbf{H}_v\|_2 \leq |\alpha_v - \alpha_v^*|(\|\mathbf{H}_{\mathrm{low},v}\|_2 + \|\mathbf{H}_{\mathrm{high},v}\|_2) \leq 2|\alpha_v - \alpha_v^*|
\]
Taking expectation over nodes:
\[
\mathbb{E}_v\|\Delta\mathbf{H}_v\|_2 \leq 2\mathbb{E}_v|\alpha_v - \alpha_v^*| \leq 2\epsilon_g
\]

\noindent \textbf{Step 3: Concentration Inequality.}  
Applying Cauchy-Schwarz inequality to the loss difference:
\[
\epsilon_{\mathrm{Gate}} = \mathbb{E}\left[|\mathcal{L}(\mathbf{H}_v) - \mathcal{L}(\mathbf{H}_v^*)|\right] \leq L \mathbb{E}\|\Delta\mathbf{H}_v\|_2 \leq 2L\epsilon_g= C_2\epsilon_g
\]
\end{proof}

\begin{lemma}[Distribution Calibration Error]
\label{lemma:distcalib}
Let $\hat{P}_{\mathcal{S}}$ be the calibrated support distribution using KDE with bandwidth $\sigma$, and $P_{\mathcal{Q}}$ be the query distribution. Under Definition 3 ($W_1(P_{\mathcal{S}},P_{\mathcal{Q}}) \leq \delta$), the distribution shift error satisfies:
\[
\epsilon_{\textit{Dist}} \leq C_3 \left( \delta + \mathcal{O}(\sigma^2) + \mathcal{O}\left(|\mathcal{Q}|^{-1/2}\right) \right)
\]
where $C_3 = L \cdot \mathrm{diam}(\mathcal{Y})$ depends on the label space diameter.
\end{lemma}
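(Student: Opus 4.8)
The plan is to bound the distribution-shift contribution $\epsilon_{\text{Dist}}$ by first passing from the loss-based discrepancy to a Wasserstein discrepancy, and then splitting that Wasserstein distance into an irreducible shift term, a kernel-smoothing bias term, and a finite-sample variance term.

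First I would invoke the $L$-Lipschitz assumption on $\mathcal{L}$ together with Kantorovich--Rubinstein duality to reduce the problem to a transport-distance estimate. Concretely, for any two laws $P, P'$ over the representation--label space one has $|\mathbb{E}_P[\mathcal{L}] - \mathbb{E}_{P'}[\mathcal{L}]| \le L\, W_1(P,P')$, where the label-dependence of the loss inflates the effective Lipschitz constant by the label-space diameter; this is exactly where $C_3 = L \cdot \mathrm{diam}(\mathcal{Y})$ originates. Applying this to $P = \hat{P}_{\mathcal{S}}$ and $P' = P_{\mathcal{Q}}$ gives $\epsilon_{\text{Dist}} \le C_3\, W_1(\hat{P}_{\mathcal{S}}, P_{\mathcal{Q}})$, so it remains only to control the transport distance between the calibrated support law and the true query law.

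Next I would decompose that transport distance by the triangle inequality through the kernel-smoothed query law $P_{\mathcal{Q}}^{(\sigma)}$ (the convolution of $P_{\mathcal{Q}}$ with the Gaussian kernel of bandwidth $\sigma$) and its empirical KDE estimate $\hat{P}_{\mathcal{Q}}^{(\sigma)}$ built from the $|\mathcal{Q}|$ query samples:
\[
W_1(\hat{P}_{\mathcal{S}}, P_{\mathcal{Q}}) \le W_1(\hat{P}_{\mathcal{S}}, \hat{P}_{\mathcal{Q}}^{(\sigma)}) + W_1(\hat{P}_{\mathcal{Q}}^{(\sigma)}, P_{\mathcal{Q}}^{(\sigma)}) + W_1(P_{\mathcal{Q}}^{(\sigma)}, P_{\mathcal{Q}}).
\]
The first term I would control by the residual gap $W_1(P_{\mathcal{S}}, P_{\mathcal{Q}}) \le \delta$ of Definition 3, arguing that the calibration map of Eqs.~\ref{cross_one}--\ref{cross_two} transports the support prototypes toward the KDE modes of the query and hence realizes a coupling whose cost is governed by $\delta$. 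The third (bias) term I would bound by a second-order Taylor expansion of the convolution: for a symmetric kernel with finite second moment and a query density that is twice continuously differentiable, the smoothing bias in the density, and hence in $W_1$, is $\tfrac{1}{2}\sigma^2 \mu_2(K)\,\|\nabla^2 p_{\mathcal{Q}}\|_\infty + o(\sigma^2) = \mathcal{O}(\sigma^2)$. The second (variance) term I would bound by a concentration inequality for the empirical KDE, treating the bandwidth $\sigma$ and the ambient dimension as fixed constants, which yields the Monte-Carlo rate $\mathcal{O}(|\mathcal{Q}|^{-1/2})$.

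Assembling the three pieces and multiplying through by $C_3$ then yields $\epsilon_{\text{Dist}} \le C_3\bigl(\delta + \mathcal{O}(\sigma^2) + \mathcal{O}(|\mathcal{Q}|^{-1/2})\bigr)$. I expect the main obstacle to be the bias--variance control of the KDE: the $\mathcal{O}(\sigma^2)$ bias genuinely requires $C^2$-smoothness of the query density and a symmetric, finite-second-moment kernel, while the clean dimension-free rate $\mathcal{O}(|\mathcal{Q}|^{-1/2})$ conceals the usual $(|\mathcal{Q}|\,\sigma^{d})^{-1/2}$ dependence of KDE variance and is legitimate only once $\sigma$ and the dimension are absorbed into the constant. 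A second delicate point is making the transport-distance triangle decomposition interact cleanly with the \emph{calibration map} rather than with raw densities, since one must verify that the calibrated law actually admits a coupling to the query whose cost splits additively into $\delta$ and the KDE error.
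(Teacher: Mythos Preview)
Your proposal is essentially correct and shares the paper's overall skeleton---Lipschitz/Kantorovich--Rubinstein reduction to $W_1$, triangle inequality, then KDE bias--variance analysis yielding $\mathcal{O}(\sigma^2)+\mathcal{O}(|\mathcal{Q}|^{-1/2})$---but the intermediate decomposition differs. The paper routes through the \emph{uncalibrated support law} $P_{\mathcal{S}}$ via a two-term split
\[
W_1(\hat{P}_{\mathcal{S}}, P_{\mathcal{Q}}) \le W_1(P_{\mathcal{S}}, P_{\mathcal{Q}}) + W_1(\hat{P}_{\mathcal{S}}, P_{\mathcal{S}}),
\]
so the $\delta$ term is read off directly from Definition~3 and the KDE bias--variance is carried out on $W_1(\hat{P}_{\mathcal{S}}, P_{\mathcal{S}})$. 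You instead route through the smoothed and empirical \emph{query} laws $P_{\mathcal{Q}}^{(\sigma)}$ and $\hat{P}_{\mathcal{Q}}^{(\sigma)}$ in a three-term split. Your route is arguably more natural conceptually, since the KDE in Eqs.~\ref{cross_one}--\ref{cross_two} is built from query samples, and you are commendably explicit about the hidden $(|\mathcal{Q}|\sigma^d)^{-1/2}$ dimension dependence and the $C^2$ smoothness needed for the Taylor bias bound---points the paper only partially acknowledges via the bandwidth choice $\sigma\sim |\mathcal{Q}|^{-1/(d+4)}$. The cost of your route is exactly the ``delicate point'' you flag: bounding $W_1(\hat{P}_{\mathcal{S}}, \hat{P}_{\mathcal{Q}}^{(\sigma)})$ by $\delta$ requires an additional argument about how the calibration map couples calibrated prototypes to the empirical query KDE, whereas the paper's split makes the $\delta$ term immediate and pushes all the analytic work into the single KDE-error term $W_1(\hat{P}_{\mathcal{S}}, P_{\mathcal{S}})$.
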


\begin{proof}
We analyze the distribution calibration error via three steps:

\noindent \textbf{Step 1: Error Decomposition.}  
Using the triangle inequality of Wasserstein distance:
\[
W_1(\hat{P}_{\mathcal{S}}, P_{\mathcal{Q}}) \leq \underbrace{W_1(P_{\mathcal{S}}, P_{\mathcal{Q}})}_{\text{Original shift}} + \underbrace{W_1(\hat{P}_{\mathcal{S}}, P_{\mathcal{S}})}_{\text{KDE estimation error}}
\]
By definition 3, the first term is bounded by $\delta$.

\noindent \textbf{Step 2: KDE Estimation Error.}  
Let $\hat{P}_{\mathcal{S}}(y) = \frac{1}{|\mathcal{Q}|}\sum_{x_j \in \mathcal{Q}} \mathcal{K}_\sigma(y - \tilde{y}_j)$ be the KDE-calibrated distribution, where $\tilde{y}_j$ are perturbed prototypes. Using the Kantorovich-Rubinstein duality~\cite{villani2008optimal}:
\[
W_1(\hat{P}_{\mathcal{S}}, P_{\mathcal{S}}) = \sup_{\|f\|_{L} \leq 1} \left| \mathbb{E}_{y \sim \hat{P}_{\mathcal{S}}}[f(y)] - \mathbb{E}_{y \sim P_{\mathcal{S}}}[f(y)] \right|
\]
where $f$ is 1-Lipschitz. This can be bounded by:
\[
W_1(\hat{P}_{\mathcal{S}}, P_{\mathcal{S}}) \leq \underbrace{\mathbb{E}[|\hat{P}_{\mathcal{S}}(y) - P_{\mathcal{S}}(y)|]}_{\text{Bias}} + \underbrace{\sqrt{\mathrm{Var}(\hat{P}_{\mathcal{S}}(y))}}_{\text{Variance}}
\]

\noindent \textbf{Step 3: Bias-Variance Analysis.}  
For Gaussian kernel $\mathcal{K}_\sigma$ with bandwidth $\sigma$:
\begin{itemize}
    \item \textbf{Bias term}: By Taylor expansion, 
    \[
    \mathbb{E}[\hat{P}_{\mathcal{S}}(y) - P_{\mathcal{S}}(y)] = \mathcal{O}(\sigma^2)
    \]
    
    \item \textbf{Variance term}: By the central limit theorem,
    \[
    \mathrm{Var}(\hat{P}_{\mathcal{S}}(y)) = \mathcal{O}\left( \frac{1}{|\mathcal{Q}|\sigma^d} \right)
    \]
    where $d$ is the feature dimension. Choosing $\sigma \sim |\mathcal{Q}|^{-1/(d+4)}$ optimizes the trade-off:
    \[
    \sqrt{\mathrm{Var}(\hat{P}_{\mathcal{S}}(y))} = \mathcal{O}\left(|\mathcal{Q}|^{-1/2}\right)
    \]
\end{itemize}

\noindent \textbf{Step 4: Final Bound.}  
Combining all terms with the Lipschitz loss:
\[
\mathcal{E}_{\textit{Dist}} \leq L \cdot \mathrm{diam}(\mathcal{Y}) \cdot W_1(\hat{P}_{\mathcal{S}}, P_{\mathcal{Q}}) \leq C_3\left( \delta + \mathcal{O}(\sigma^2) + \mathcal{O}\left(|\mathcal{Q}|^{-1/2}\right) \right)
\]
where $\mathrm{diam}(\mathcal{Y}) = \sup_{y,y'\in\mathcal{Y}} \|y - y'\|_2$.
\end{proof}

Next, we formally prove the Theorem \ref{thm:main}.

\begin{proof}
    Recall Eq.\ref{decomposition} and Lemmas \ref{lemma:moebound}, \ref{lemma:gateerror}, \ref{lemma:distcalib}, the following inequality holds:
    \begin{equation}
    \begin{aligned}
        &\epsilon_{\textit{gen}} = \underbrace{\epsilon_{\textit{MoE}}}_{\text{Expert Variance}} + \underbrace{\epsilon_{\textit{Gate}}}_{\text{Gating Error}} + \underbrace{\epsilon_{\textit{Dist}}}_{\text{Distribution Shift}} \\
        &\leq C_1\sqrt{\frac{\log T}{T}} + C_2\epsilon_g + 
C_3\left(\delta + \mathcal{O}(\sigma^2) +
\mathcal{O}(|\mathcal{Q}|^{-1/2})\right)
    \end{aligned}
    \end{equation}
\end{proof}
Thus, we complete the proof of Theorem \ref{thm:main}.



\subsubsection{Proof of Corollary~\ref{coro_expert}}
\label{app:corollary}

\begin{proof}
We compare the generalization error of the proposed model ($\epsilon_{\textit{gen}}^{\textit{MoE}}$) with a baseline using a single graph filter ($\epsilon_{\textit{gen}}^{\textit{Sin}}$). 

\textbf{Step 1: Baseline Error Characterization.}  
For the single-filter baseline, Theorem~\ref{thm:main} implies:
\[
\epsilon_{\textit{gen}}^{\textit{Sin}} \leq C_1\sqrt{\frac{\log T}{T}} + C_2\epsilon_g^{\textit{Sin}}.
\]

\textbf{Step 2: Error Difference Analysis.}  
Subtract the proposed model's bound (Theorem~\ref{thm:main}) from the baseline:
\begin{align*}
\Delta\epsilon &= \epsilon_{\textit{gen}}^{\textit{MoE}} -  \epsilon_{\textit{gen}}^{\textit{Sin}} \\
&\leq  C_2\left(\epsilon_g - \epsilon_g^{\textit{Sin}}\right) + \mathcal{O}\left(|\mathcal{Q}|^{-1/2}\right).
\end{align*}


\textbf{Step 3: Gating Advantage.}  
Under the strong heterogeneity ($\epsilon_g \to 0$), and noting $\epsilon_g^{\textit{Sin}} \geq \epsilon_g$:
\[
C_2(\epsilon_g -\epsilon_g^{\textit{Sin}}) \leq L(\epsilon_g - \epsilon_g) = 0.
\]

\textbf{Step 4: Final Inequality.}  
Combining these results:
\[
\Delta\epsilon \leq \mathcal{O}\left(|\mathcal{Q}|^{-1/2}\right).
\]
Thus, we complete the proof of Corallary \ref{coro_expert}.
\end{proof}

\subsection{Dataset Descriptions}
\label{dataset_description}
We conduct experiments on a variety of graph datasets from different domains. Each dataset is divided into disjoint class sets for meta-training, meta-validation, and meta-testing. The details are as follows:

\textbf{Cora} \cite{yang2016revisiting}: A citation graph where nodes represent academic papers and edges indicate citation relationships. Each node is assigned a label based on the paper’s research topic. We divide the dataset into 3, 2, and 2 classes for meta-training, meta-validation, and meta-testing, respectively.

\textbf{CiteSeer} \cite{yang2016revisiting}: A document-level citation network consisting of scientific publications as nodes and citation links as edges. Labels reflect the thematic area of each document. The dataset is split into 2 classes for each of the three meta-learning phases.

\textbf{Amazon-Computer} \cite{shchur2018pitfalls}: A co-purchase network constructed from Amazon product data. Nodes denote products, and edges connect items frequently purchased together. Each product is categorized based on its functional type. We apply a 4/3/3 class split for training, validation, and testing.

\textbf{Coauthor-CS} \cite{shchur2018pitfalls}: A collaboration graph in which nodes correspond to authors and edges indicate co-authored publications within the computer science domain. Labels are derived from research specialties. A 5-class split is used for each meta stage.

\textbf{DBLP} \cite{tang2008arnetminer}: A bibliographic co-authorship network where each node denotes a researcher and edges indicate joint publications. Node labels reflect academic fields. We partition the dataset into 77, 30, and 30 classes for training, validation, and testing.

\textbf{CoraFull} \cite{bojchevski2017deep}: An extended version of the Cora dataset that includes a broader range of categories. Nodes represent papers, and citation links define the graph structure. We use 40 classes for meta-training, 15 for validation, and 15 for testing.

\textbf{ogbn-arxiv} \cite{hu2020open}: A large-scale graph built from arXiv submissions in computer science. Each node corresponds to a paper, and edges are formed based on citation patterns. Labels are based on subject areas defined in the arXiv taxonomy. The dataset is split into 20 classes for training and 10 classes each for validation and testing.

\subsection{Baseline Descriptions}
\label{baseline_description}
\subsubsection{Graph Embedding Methods}  
\textbf{DeepWalk} \cite{perozzi2014deepwalk}: It leverages random walks inspired by the word2vec algorithm to generate low-dimensional node embeddings for graphs.\\   
\textbf{GCN} \cite{kipf2016semi}: It employs a first-order Chebyshev approximation graph filter to derive hidden node embeddings, which are then utilized for downstream task analysis.\\  
\textbf{SGC} \cite{wu2019simplifying}: It streamlines the GCN architecture by eliminating non-linear activations and collapsing weight matrices, resulting in a simpler yet efficient model.
\subsubsection{Meta-Learning Methods}
\textbf{ProtoNet} \cite{snell2017prototypical}: It learns a metric space and predicts query sample categories by measuring their similarity to class prototypes derived from support samples.\\
\textbf{MAML} \cite{finn2017model}: By optimizing model parameters through one or few gradient updates, it enables fast adaptation to new tasks with limited labeled data, providing a well-initialized meta-learner.
\subsubsection{Graph Meta-Learning Methods}
\textbf{GPN} \cite{ding2020graph}: It adapts ProtoNet by integrating a graph encoder and evaluator to learn node embeddings, assess node importance, and classify new samples based on their proximity to the nearest class prototype.\\
\textbf{G-Meta} \cite{huang2020graph}: By constructing node-specific subgraphs, it propagates localized node information and employs meta-gradients to extract transferable knowledge across tasks.\\
\textbf{TENT} \cite{wang2022task}: It introduces an adaptive framework with node-level, class-level, and task-level components to bridge the generalization gap between meta-training and meta-testing, while minimizing performance fluctuations caused by task variations.\\
\textbf{Meta-GPS} \cite{liu2022few}: Enhancing MAML, it incorporates prototype-based parameter initialization, scaling, and shifting transformations to improve meta-knowledge transfer and enable faster adaptation to new tasks.\\
\textbf{TEG} \cite{kim2023task}: It designs a task-equivariant graph framework using equivariant neural networks to learn task-adaptive strategies, effectively capturing inductive biases from diverse tasks.\\
\textbf{COSMIC} \cite{wang2023contrastive}: It proposes a contrastive meta-learning framework that aligns node embeddings within each episode through a two-step optimization process for improved few-shot learning.

\textbf{Meta-BP} \cite{zhang2025unlocking}: It proposes a lightweight graph meta-learner that extracts relevant knowledge from a black-box pre-trained GNN and leverages task-relevant information to quickly adapt to new tasks, while pruning the meta-learner to enhance its generalization ability on unseen tasks.

\subsection{More Ablation Study}
\label{Ablation_Study}


We conduct extensive ablation studies to examine the contribution of individual components in our proposed framework. By systematically removing or altering specific modules, we aim to assess their impact on overall performance and provide insights into the design choices. The detailed results are summarized in Tables~\ref{Addition_Ablation_1}, \ref{Addition_Ablation_2}, \ref{Addition_Ablation_3}, and \ref{Addition_Ablation_4}. Specifically, Table~\ref{Addition_Ablation_4} presents an additional ablation on the gating inputs defined in Eq.~\ref{gating}, where the input vector is constructed as \(\mathbf{X}_g = \mathbf{X} \| \mathbf{N} \| \phi \| \mathbf{D}\), with \(\mathbf{X}\) denoting the original node feature, \(\mathbf{N} = |\hat{\mathbf{A}}\mathbf{X} - \mathbf{X}|\) the one-hop neighborhood difference, \(\phi\) the feature-wise standard deviation, and \(\mathbf{D}\) the node degree. We design four variants accordingly: (I) \textit{w/o \(\mathbf{X}\)}: We remove the original feature; (II) \textit{w/o \(\mathbf{N}\)}: We discard the neighborhood difference; (III) \textit{w/o \(\phi\)}: We eliminate the standard deviation; (IV) \textit{w/o \(\mathbf{D}\)}: We exclude the degree information.

The ablation results clearly demonstrate that each designed module contributes significantly to the overall performance, which is consistent with our analysis in the ablation study section of the main text.

\begin{table*}[!ht]
\caption{Results of different model variants on three datasets.}
\scriptsize
\resizebox{\textwidth}{!}{%
\begin{tabular}{l|cc|cc|cc}
\toprule
\multirow{2}{*}{Model} & \multicolumn{2}{c|}{Cora}                         & \multicolumn{2}{c|}{CiteSeer}                     & \multicolumn{2}{c}{Amazon-Computer}               \\ \cmidrule{2-7} 
                       & 2 way 3 shot            & 2 way 5 shot            & 2 way 3 shot            & 2 way 5 shot            & 2 way 3 shot            & 2 way 5 shot            \\ \hline
\textit{w/o high}      & 74.82$\pm$2.49          & 82.89$\pm$2.04          & 73.76$\pm$2.43          & 77.62$\pm$2.03          & 88.34$\pm$1.30          & 90.57$\pm$1.13          \\
\textit{w/o low}       & 78.91$\pm$2.11          & 83.37$\pm$1.93          & 67.46$\pm$2.39          & 70.62$\pm$2.23          & 92.06$\pm$0.60          & 94.31$\pm$5.54          \\
\textit{w/o cal}       & 82.35$\pm$2.04          & 85.35$\pm$1.77          & 71.17$\pm$2.44          & 79.32$\pm$1.69          & 92.32$\pm$0.55          & 94.62$\pm$0.52          \\
\textit{w/o both}      & 74.05$\pm$1.96          & 76.39$\pm$2.33          & 64.22$\pm$2.92          & 65.59$\pm$2.49          & 72.19$\pm$2.30          & 76.19$\pm$2.21          \\
Ours                   & \textbf{82.40$\pm$2.03} & \textbf{86.19$\pm$1.80} & \textbf{75.67$\pm$2.44} & \textbf{79.64$\pm$1.79} & \textbf{92.46$\pm$0.55} & \textbf{94.66$\pm$0.50} \\ 
\bottomrule
\end{tabular}
}
\label{Addition_Ablation_1}
\end{table*}

\begin{table*}[!ht]
\caption{Results of different model variants on two datasets.}
\resizebox{\textwidth}{!}{%
\begin{tabular}{l|ccc|ccc}
\toprule
\multirow{2}{*}{Model} & \multicolumn{3}{c|}{Coauthor-CS}                                                  & \multicolumn{3}{c}{DBLP}                                                          \\ \cmidrule{2-7} 
                       & 2 way 3 shot              & 2 way 5 shot              & 5 way 5 shot              & 5 way 3 shot              & 10 way 3 shot             & 10 way 5 shot             \\ \midrule
\textit{w/o high}          & 93.46$\pm$1.41          & 93.00$\pm$1.40          & 80.82$\pm$1.19          & 76.85$\pm$2.11          & 66.81$\pm$1.63          & 70.08$\pm$1.59          \\
\textit{w/o low}         & 94.60$\pm$1.34          & 96.18$\pm$0.96          & 85.31$\pm$1.03          & 79.75$\pm$2.03          & 72.50$\pm$1.49          & 76.65$\pm$1.42          \\
\textit{w/o cal}                & 94.98$\pm$1.38          & 95.36$\pm$1.21          & 86.27$\pm$0.95 & 80.14$\pm$2.08          & 73.75$\pm$1.55          & 75.90$\pm$1.49          \\
\textit{w/o both}        & 85.60$\pm$2.15          & 88.70$\pm$2.21          & 81.79$\pm$3.18          & 75.39$\pm$3.41               & 67.20$\pm$2.40               & 71.12$\pm$1.87               \\
Ours                   & \textbf{95.50$\pm$1.30} & \textbf{96.20$\pm$0.97} & \textbf{86.82$\pm$1.01}          & \textbf{81.72$\pm$2.05} & \textbf{74.22$\pm$1.56} & \textbf{76.70$\pm$1.46} \\ 
\bottomrule
\end{tabular}
}
\label{Addition_Ablation_2}
\end{table*}

\begin{table*}[!ht]
\caption{Results of different model variants on two datasets.}
\resizebox{\textwidth}{!}{%
\begin{tabular}{l|ccc|ccc}
\toprule
\multirow{2}{*}{Model} & \multicolumn{3}{c|}{CoraFull}                    & \multicolumn{3}{c}{ogbn-arxiv}                   \\ \cmidrule{2-7} 
                       & 5 way 5 shot   & 10 way 3 shot  & 10 way 5 shot  & 5 way 5 shot   & 10 way 3 shot  & 10 way 5 shot  \\ \midrule
\textit{w/o high}                  & 79.07$\pm$1.35 & 67.99$\pm$1.13 & 70.55$\pm$1.00 & 65.64$\pm$1.78 & 47.29$\pm$1.01 & 51.55$\pm$0.92 \\
\textit{w/o low}                & 81.32$\pm$1.23 & 67.57$\pm$1.15 & 73.95$\pm$0.97 & 67.01$\pm$1.62 & 49.98$\pm$1.04 & 54.94$\pm$0.91 \\
\textit{w/o cal}                & 80.38$\pm$1.30 & 70.09$\pm$1.11 & 74.20$\pm$0.93 & 67.58$\pm$1.75 & 48.46$\pm$1.06 & 53.77$\pm$0.95 \\
\textit{w/o both}        & 60.31$\pm$2.19 & 50.93$\pm$2.30 & 56.21$\pm$2.09 & 50.50$\pm$2.13 & 37.36$\pm$1.99 & 42.16$\pm$2.19 \\
Ours                   & \textbf{81.60$\pm$1.28} & \textbf{70.91$\pm$1.08} & \textbf{74.54$\pm$0.98} & \textbf{68.34$\pm$1.73} & \textbf{50.18$\pm$1.01} & \textbf{55.07$\pm$0.91} \\ \bottomrule
\end{tabular}
}
\label{Addition_Ablation_3}
\end{table*}


\begin{table*}[!ht]
\centering
\caption{Results of different model variants on seven datasets.}
\resizebox{\textwidth}{!}{%
\begin{tabular}{l|ccccccc}
\toprule
\multirow{2}{*}{Model} & Cora                    & CiteSeer       & Amazon-Computer         & Coauthor-CS             & DBLP                    & CoraFull                & ogbn-arxiv              \\ \cmidrule{2-8} 
                       & 2 way 1 shot            & 2 way 1 shot     & 2 way 1 shot              & 5 way 3 shot              & 5 way 5 shot              & 5 way 5 shot              & 5 way 3 shot              \\ \midrule
\textit{w/o $\mathbf{X}$}         & 63.46$\pm$2.75          & 63.83$\pm$2.74          & 89.95$\pm$0.80          & 85.36$\pm$1.15          & 84.81$\pm$1.91          & 80.91$\pm$1.30          & 62.46$\pm$1.86          \\
\textit{w/o $\mathbf{N}$}     & 63.12$\pm$2.93          & 60.99$\pm$2.91          & 89.53$\pm$0.84          & 84.63$\pm$1.19          & 84.55$\pm$1.84          & 80.80$\pm$1.27          & 62.06$\pm$1.93          \\
\textit{w/o $\phi$}       & 61.46$\pm$2.74          & 63.68$\pm$3.07          & 89.47$\pm$1.01          & 85.39$\pm$1.10          & 84.83$\pm$1.90          & 81.01$\pm$1.20          & 62.19$\pm$1.92          \\
\textit{w/o $\mathbf{D}$}       & 63.80$\pm$2.84          & 62.27$\pm$2.93          & 89.41$\pm$1.03          & 85.26$\pm$1.09          & 85.01$\pm$1.83          & 81.52$\pm$1.28          & 62.26$\pm$1.93          \\
Ours                   & \textbf{66.48$\pm$2.88} & \textbf{63.90$\pm$2.84} & \textbf{90.23$\pm$0.90} & \textbf{86.03$\pm$1.05} & \textbf{85.30$\pm$1.90} & \textbf{81.60$\pm$1.28} & \textbf{62.31$\pm$1.94}  \\ \bottomrule
\end{tabular}
}
\label{Addition_Ablation_4}
\end{table*}

\subsection{Limitation}
\label{limitation}
Although our model achieves outstanding performance in graph few-shot learning, it currently considers only high-pass and low-pass filters. Incorporating a broader range of spectrum experts could potentially further enhance the model's performance. Moreover, it introduces several critical hyperparameters that influence the final performance. Determining the optimal settings for these enhancements remains a challenging task. This indicates that there is still room for improvement in our model's performance due to the impact of hyperparameters.

\subsection{Broader Impacts}
\label{broader}
This study aims to develop an effective approach for graph few-shot learning. Our proposed method not only advances the development of graph-based few-shot learning but may also offer insights for few-shot learning in other domains. While our work does not involve any ethical concerns, it may carry potential societal implications. However, we believe it is not necessary to emphasize them here.

\end{document}